\documentclass[11pt]{article}
\usepackage[utf8]{inputenc} 
\usepackage[T1]{fontenc}    
\usepackage{hyperref}       
\usepackage{url}            
\usepackage{booktabs}       
\usepackage{amsfonts}       
\usepackage{nicefrac}       
\usepackage{microtype}      
\usepackage{xcolor}         
\usepackage{amsmath,amsthm, bbm} 
\usepackage{graphicx}
\usepackage{subfigure}
\usepackage{enumitem}
\setlist[itemize]{leftmargin=1cm}
\setlist[enumerate]{leftmargin=1cm}
\usepackage{enumitem}
\usepackage{booktabs}
\usepackage{multirow}
\usepackage{adjustbox}
\usepackage{algorithm}
\usepackage{algorithmic}
\usepackage{wrapfig}
\usepackage{comment}
\usepackage{multirow}
\usepackage{booktabs}
\usepackage{multirow}
\usepackage{adjustbox}
\usepackage{pdflscape}
\usepackage{pifont}
\usepackage{array}
\usepackage{natbib}
\usepackage{graphicx}
\usepackage{subcaption}
\usepackage{fontawesome5} 

\usepackage[margin = 1in]{geometry}

\newtheorem{theorem}{Theorem}[section]
\newtheorem{proposition}[theorem]{Proposition}

\theoremstyle{definition}

\newtheorem{assumption}[theorem]{Assumption}
\theoremstyle{remark}
\newtheorem{remark}[theorem]{Remark}

\newtheoremstyle{named}{}{}{\itshape}{}{\bfseries}{.}{.5em}{\thmnote{#3 }#1}
\theoremstyle{named}

\newcommand{\Pro}{\mathbb{P}}

\newcommand{\lx}[1]{{\color{purple}{[\bf\sf lx: #1]}}}

\newcommand{\bx}{\mathbf{x}}

\newcommand{\bc}{\mathbf{c}}

\newcommand{\bs}{\mathbf{s}}

\newcommand{\obs}{\mathrm{obs}}
\newcommand{\true}{\text{true}}

\newcommand{\calX}{\mathcal{X}}
\newcommand{\barcalX}{\bar{\mathcal{X}}}
\newcommand{\na}{\mathrm{na}}

\newcommand*{\ie}{{\it i.e.}{}}

\newcommand{\algname}{Diff-Joint}
\newcommand{\algnamefd}{Forest-Diffusion}
\newcommand{\algnamekmean}{$k$-means clustering with $k=2$}

\newcommand{\cmark}{\ding{51}}
\newcommand{\xmark}{\ding{55}}

\hypersetup{
    colorlinks=true,
    citecolor={blue!50!black},
    urlcolor={cyan!40!black},
    linkcolor={red!60!black}
}

\usepackage{comment}

\title{\bfseries\LARGE Learning What Not to Impute: An Uncertainty-Aware Diffusion Framework for Meaningful Missingness}

\date{}

\begin{document}

\maketitle

\vspace{-5em}

\begin{center}
{\large
Lixing Zhang$^{1}$ \qquad
Yidong Ouyang$^{2}$ \qquad
Weifu Li$^{1}$ \qquad
Shixiang Zhu$^{3}$ \\
Guang Cheng$^{2}$ \qquad
Liyan Xie$^{1}$
}

{\large
$^{1}$University of Minnesota
\quad
$^{2}$University of California, Los Angeles
\\
$^{3}$Carnegie Mellon University
}

\vspace{0.8em}

\faGithub\;Code: \href{https://github.com/lxzhang1/Diff-Joint}
{https://github.com/lxzhang1/Diff-Joint}

\begingroup
\renewcommand\thefootnote{}
\footnotetext{\it \hspace*{-1.8em}Main contact: liyanxie@umn.edu
}
\endgroup

\vspace{0.5em}

\end{center}

\begin{abstract}
Missing value imputation is a fundamental task in machine learning, with most existing methods assuming that all missing entries correspond to unobserved regular values. In many real-world datasets, however, missingness may arise from two distinct sources: some entries are {\it meaningfully missing} (intrinsically absent and semantically valid), while others are missing due to the observation process and should be imputed. We formalize this distinction as a {\it selective imputation} problem, where the goal is to jointly infer which missing entries should be preserved and which should be recovered. To address this challenge, we propose \algname{}, a diffusion-based framework that jointly models tabular data together with a latent missingness mask. The method alternates between conditional sampling and uncertainty-aware aggregation to iteratively refine both imputed values and missingness labels. Empirical results on synthetic and real-world datasets demonstrate that \algname{} effectively identifies meaningfully missing entries while achieving competitive imputation accuracy and improved downstream task performance.
\end{abstract}

\section{Introduction}

Missing value imputation is a common and important problem in machine learning, statistics, and data mining \cite{donders2006gentle,emmanuel2021survey}. In many applications, training datasets contain missing entries that must be imputed either as quantities of direct interest or as a preprocessing step for
downstream modeling, analysis, and decision-making. 
A broad range of methods have been developed to recover missing entries from the observed data values, ranging from classical statistical procedures \cite{dempster1977em,rubin1988overview,vanbuuren2011mice, stekhoven2012missforest} to modern deep generative models \cite{yoon2018gain,mattei2019miwae,tashiro2021csdi, zheng2022tabcsdi}.

Under the conventional imputation task, all missing entries are typically treated as unobserved regular values that need to be fully recovered. However, it is important to notice that in some datasets, the missing value can arise from two distinct sources: an entry may be meaningfully missing in the complete record, or a regular value that went missing during the observation process \cite{rubin1976inference}. For example, in clinical records, a laboratory measurement may be missing because the test was not ordered, but the absence of the test order itself may carry clinical meaning \cite{lipton2015learning}. In survey data, a response such as ``n/a'' may be a valid answer rather than an unobserved value; for instance, income from employment is genuinely not applicable for a respondent who is not employed \cite{allison2009missing}. In e-commerce or recommender-system data, the absence of a product attribute may indicate that the attribute is inapplicable to the item, rather than that the attribute was accidentally omitted \cite{hu2008collaborative, rendle2012bpr}. See Figure~\ref{fig:mm-example} for a conceptual illustration.

\begin{wrapfigure}{r}{0.4\textwidth}
  \begin{center}
  \vspace{-0.1in}
    \includegraphics[width=0.95\linewidth]{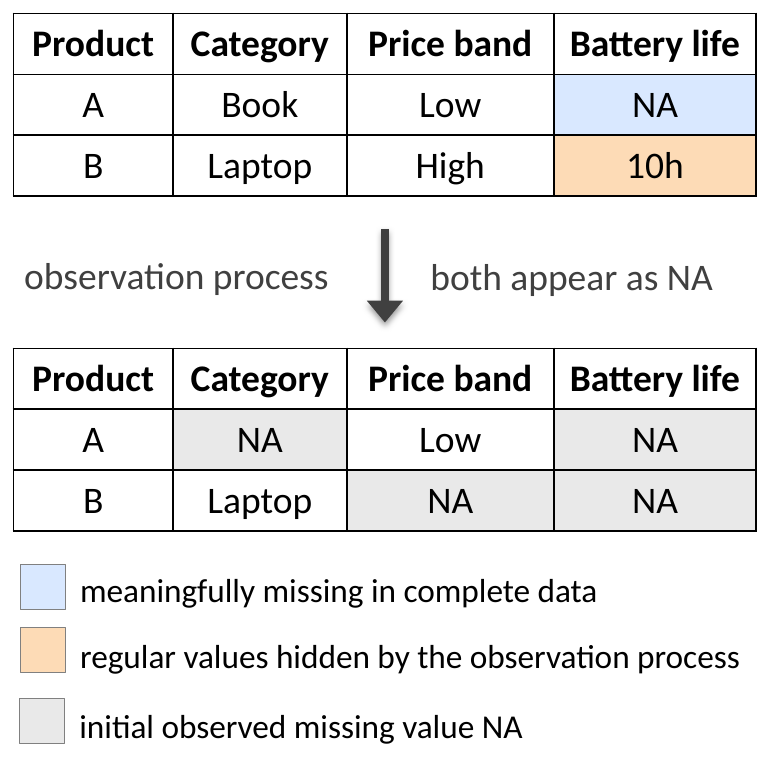}
  \end{center}
  \vspace{-0.1in}
  \caption{An illustrative example of ``meaningful missingness.'' 
  }\label{fig:mm-example}
  \vspace{-0.1in}
\end{wrapfigure}

This distinction implies that not all observed missing entries should be treated in the same way. Some entries are missing because the missing state itself is part of the underlying record. When a missing entry is meaningful, imputing it with a regular value can distort the data distribution, remove useful semantic information, and introduce bias to the learned data distribution. 
The central goal is therefore not only to estimate the value of a missing
entry, but also to determine whether the entry should be imputed. This problem is challenging because the two sources of missingness are not directly labeled in the observed data. Moreover, the distinction is entry-wise and context-dependent: the same missing token may be meaningful for
one sample but observation-induced for another. 


This calls for new imputation methods that can decide not only {\it how to impute}, but also {\it when not to impute}. In particular, the goal is to preserve missing entries that are meaningful while imputing only those entries that are missing due to the observation process. To this end, we propose a new learning framework that performs {\it selective imputation}: rather than completing all missing entries, the method jointly identifies meaningfully missing states and recovers observation-induced missing values.
We instantiate this framework using diffusion models \cite{ho2020denoising,Song2021ScoreBasedGM}, which provide a flexible backbone for modeling complex tabular distributions and imputation \cite{zheng2022tabcsdi,zhang2025diffputer}. 

The proposed method, \algname{}, introduces a joint diffusion state $(\bx,\bc)$, where $\bx$ represents the completed tabular values and $\bc$ is a binary mask indicating
which entries are meaningfully missing. Starting from a random initialization, \algname{} alternates between two steps. First, it trains a diffusion model on the current joint state. Second, it draws multiple conditional samples given the observed entries and aggregates these samples to update both the imputed values and the meaningfully-missing mask. The aggregation step leverages the entrywise {\it uncertainty scores}: high posterior uncertainty provides evidence that an observed ``$\na$'' may correspond to a meaningful missing state rather than a recoverable regular value. Through this iterative refinement, the model learns both the data distribution and the missingness structure.

Our contributions are summarized as follows.
\begin{enumerate}
    \item We formulate a selective imputation problem for tabular data with two latent sources of observed missingness: meaningfully missing entries that should be preserved, and observation-induced missing entries that should be imputed.

    \item We propose \algname{}, a diffusion-based framework that jointly models completed tabular values and meaningfully-missing masks. By alternating between diffusion-model training and uncertainty-aware aggregation of conditional samples, \algname{} iteratively refines both the imputed values and the labels of meaningfully missing entries.
    
    \item We evaluate the proposed method on both a synthetic Bayesian-network dataset and a real-world dataset based on MIMIC-IV-ED. The results show that \algname{} can identify meaningfully missing entries while maintaining competitive imputation performance and improving downstream predictive performance.
\end{enumerate}

\vspace{-0.1in}
\paragraph{Related Work.}
Missing-value imputation has been extensively studied, and existing approaches can be roughly divided into two categories: classical statistical methods and modern deep generative models.  
Classical missing-data theory separates the underlying data distribution from the observation process, and commonly categorizes missingness mechanisms as missing completely at random (MCAR), missing at random (MAR), or missing not at random (MNAR) \citep{rubin1976inference,little2019statistical}. This perspective underlies many classical imputation methods, including likelihood-based estimation with the EM algorithm \citep{dempster1977em}, chained-equation imputation \citep{vanbuuren2011mice}, and random-forest imputation for mixed-type data \citep{stekhoven2012missforest}. These methods mainly treat missing entries as unobserved regular values to be recovered. 

Deep generative models now provide a flexible framework for imputation. Representative approaches include adversarial imputation with GAIN \citep{yoon2018gain}, latent-variable modeling with MIWAE \citep{mattei2019miwae}, adaptive iterative imputation with HyperImpute \citep{pmlr-v162-jarrett22a}, and masked-reconstruction methods such as ReMasker and CACTI \citep{du2024remasker,gorla2025cacti}. 
Diffusion and score-based generative models have become a powerful class of generative models \citep{ho2020denoising,ouyang2023missdiff,Song2021ScoreBasedGM,suh2025timeautodiff}. For missing-data problems, CSDI trains conditional score-based models for probabilistic time-series imputation, while TabCSDI adapts this idea to mixed-type tabular data \citep{tashiro2021csdi,zheng2022tabcsdi}. Forest-Diffusion combines diffusion or flow-based generative modeling with gradient-boosted trees for tabular generation and imputation \citep{jolicoeurmartineau2023generating}. Closest to our iterative training procedure, DiffPuter combines diffusion models with an EM-style refinement loop to learn from incomplete data and update missing-value estimates through conditional sampling \citep{zhang2025diffputer}. These diffusion-based methods mainly focus on completing all missing values in the data rather than modeling the meaningful-missingness explicitly.

Another related line of work recognizes that missingness patterns themselves can carry useful information. In this direction, missing-data handling has been studied for robust prediction from incomplete inputs \citep{garcia2010pattern}, and recurrent models for clinical time series incorporate masks and time gaps as predictive features \citep{che2018recurrent}. Recent work on synthetic data generation also emphasizes that preserving missingness distributions can be important for downstream utility \citep{wang2023preserving}. These works show that missingness should not always be ignored or naively filled. However, they typically use the observed missingness pattern as an auxiliary feature and do not explicitly model the entry-wise distinction between missing states. Our work makes this distinction explicit by introducing a meaningfully-missing mask and learning it jointly with the underlying data distribution.

\section{Problem Setup and Preliminaries}

We consider mixed-type tabular data with $d$ features (columns). For the $j$-th feature, let $\mathcal{X}_j$ denote its domain of regular values, and define the augmented domain $\bar{\mathcal{X}}_j = \mathcal{X}_j \cup \{\na\}$, where $\na$ is used throughout the paper to denote the observed missing values. We represent a complete data point as $\mathbf{x}^{\text{true}} = (x_1,\ldots,x_d) \in \bar{\mathcal{X}}_1 \times \cdots \times \bar{\mathcal{X}}_d$, drawn from the underlying data-generating distribution on the augmented domain. If $x_j=\na$ in $\bx^{\true}$, then the $j$-th entry is \emph{meaningfully missing} (MM), meaning that the value is intrinsically absent and should be treated as a valid state. 
To model such meaningfully missing entries, we associate each complete data instance $\bx^{\text{true}}$ with a binary mask
\begin{equation}\label{eq:mm}
\mathbf{c} = (c_1,\ldots,c_d) \in \{0,1\}^d, \quad c_j = \mathbbm{1}\{x^{\text{true}}_j = \na\}.
\end{equation}
In particular, $c_j = 1$ indicates that the $j$-th entry is intrinsically missing.

In addition to such intrinsically missing entries, an observed record may also exhibit {\it observation-induced missingness}, arising from the observation process after the underlying data instance is generated. Typical causes include incomplete data collection, recording errors, and data transfer failures. These observation-induced missing entries are also recorded using the same symbol $\na$ in the observed data. We encode this second source using another binary mask $\mathbf{r} = (r_1,\ldots,r_d) \in \{0,1\}^d$, where $r_j = 1$ indicates that the $j$-th entry is missing due to the observation process.

The final observation is then determined jointly by meaningful missingness and observation-induced missingness. We define the observation mask $\boldsymbol{\omega} = (\omega_1,\ldots,\omega_d) \in \{0,1\}^d$ with $\omega_j = (1 - c_j)(1 - r_j)$. Given $(\bx^{\text{true}}, \boldsymbol{\omega})$, the observed record $\mathbf{x}^{\mathrm{obs}} = (x_1^{\mathrm{obs}},\ldots,x_d^{\mathrm{obs}})$ is defined as $x_j^{\mathrm{obs}} = x_j$ if $\omega_j = 1$, and $x_j^{\mathrm{obs}} = \na$ otherwise. 
In other words, the same symbol $\na$ is observed regardless of whether the underlying cause is meaningful missingness or observation-induced missingness. Consequently, these two types of missingness are not directly distinguishable from the observed data alone. 
Our framework therefore assumes that the two missingness mechanisms induce distinct statistical properties in the conditional distribution of missing entries given the observed context. We formalize this mechanism-separation perspective and establish identifiability conditions in Section~\ref{sec:identifiability_theory}.

Given $n$ observed records $\bx^\obs_1,\ldots,\bx^\obs_n$, our goal is to infer which entries are meaningfully missing and to impute {\it only} those entries that are observation-induced missing. Specifically, we first aim to infer the meaningful-missingness mask
$\widehat\bc_i=(\widehat c_{i,1},\ldots,\widehat c_{i,d})$ to determine which entries of each record should remain $\na$. Second, for entries not identified as meaningfully
missing, we estimate their regular values and output an imputed value
$\widehat{x}_{i,j}\in\calX_j$. Thus, the final imputed record preserves $\na$ for entries with $\widehat c_{i,j}=1$, while replacing observation-induced missing entries with their imputed values
$\widehat{x}_{i,j}$.





\subsection{Preliminaries: Diffusion Models}\label{sec:diffusion}

This work uses diffusion models to learn the data distribution \cite{ho2020denoising}. Diffusion models are characterized by their forward and backward processes. The forward process perturbs the data distribution $p(\mathbf{x})$ by injecting Gaussian noise, as described by the following continuous-time equation \cite{Song2021ScoreBasedGM}:
\begin{equation}\label{eq:forward}
    \mathrm{d} \mathbf{x}_t=\mathbf{f}(\mathbf{x}_t, t) \mathrm{d} t+g(t) \mathrm{d} \mathbf{w}, \ t\in[0,T],
\end{equation}
where $\mathbf{w}$ is the standard Brownian motion, $\mathbf{f}(\cdot, t): \mathbb{R}^d \rightarrow \mathbb{R}^d$ is a drift coefficient, and $g(\cdot): \mathbb{R} \rightarrow \mathbb{R}$ is a  diffusion coefficient. The marginal distribution of $\mathbf{x}_t$ at time $t$ is denoted as $p_t(\mathbf{x}_t)$, and $p_0$ is the distribution of the initial value $\mathbf{x}_0$, which equals the true data distribution. 
Then, we can reverse the forward process \eqref{eq:forward} for generation, defined as:\begin{equation}\label{eq:backward}
\mathrm{d} \mathbf{x}_t=\left[\mathbf{f}(\mathbf{x}_t, t)-g(t)^2 \nabla_{\mathbf{x}} \log p_t(\mathbf{x})\right] \mathrm{d} t+g(t) \mathrm{d} \overline{\mathbf{w}},
\end{equation}
where $\overline{\mathbf{w}}$ is a standard Brownian motion when time flows backwards from $T$ to 0. 
The key of the backward process is estimating the score function of each marginal distribution, $\nabla_{\mathbf{x}} \log p_t(\mathbf{x})$, by training a score network $\mathbf{s}_{\boldsymbol{\theta}}(\mathbf{x}_t, t)$ \citep{Hyvrinen2005EstimationON,vincent2011connection,song2020sliced}
\begin{equation}\label{eq:dsm}
\boldsymbol{\theta}^*= \underset{\boldsymbol{\theta}}{\arg \min } \ \mathbb{E}_{t\sim \text{Unif}[0,T]}\left\{\lambda(t) \mathbb{E}_{p_t(\mathbf{x}_t)} \left[\left\|\mathbf{s}_{\boldsymbol{\theta}}(\mathbf{x}_t, t)-
\nabla_{\mathbf{x}_t} \log p_t(\mathbf{x}_t)\right\|_2^2\right]\right\},
\end{equation}
where $\lambda(t):[0, T] \rightarrow \mathbb{R}_{>0}$ is a positive weighting function. 


It is worthwhile mentioning that since tabular data may contain both continuous and discrete variables, we use an encoder described in Appendix~\ref{app:imple-detail} to map each record into a continuous model space for diffusion-model training and sampling.
The corresponding decoder maps generated samples back to the original mixed-type tabular space.
For notational simplicity, we use $\mathbf{x}$ for both the original and encoded representations in the rest of the paper; the distinction is clear from context. 

\section{Methodology} \label{sub:diffusion}

\begin{figure}[t]
\centering
\vspace{-0.1in}
\includegraphics[width=\linewidth]{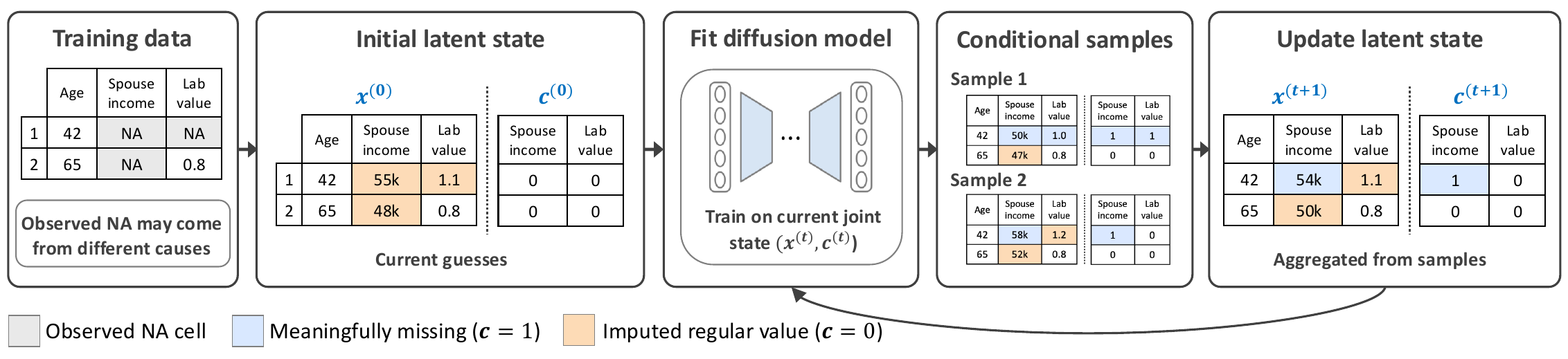}
\vspace{-0.2in}
\caption{Overview of \algname.
Starting from incomplete observations $\bx^{\mathrm{obs}}$, \algname{} alternates between diffusion-model training on the current joint state $\widehat{\bs}^{(t)} = (\widehat \bx^{(t)}, \widehat \bc^{(t)})$, and latent state update via conditional sampling and aggregation.}
\vspace{-0.1in}
\label{fig:diff_joint_overview}
\end{figure}

\subsection{\algname{}: Joint Diffusion for Selective Imputation} \label{sec:method}

To model meaningful missingness, we define the joint diffusion state
$\mathbf s = (\bx,\bc)$, where $\mathbf c$ is the meaningful-missingness (MM) mask. Based on such joint state representation, we propose \algname, an iterative framework for jointly modeling tabular values and meaningful-missingness patterns. 
As illustrated in Figure \ref{fig:diff_joint_overview}, given the observed dataset $\{\bx^\obs_i\}_{i=1}^n$, the \algname{} framework initializes the observed missing values by randomly filling the missing entries in each $\bx_i^{\obs}$ to obtain $\widehat{\bx}_i^{(0)}$, and setting
$\widehat{\bc}_i^{(0)}=\mathbf{0}$. This yields the initial joint state $\widehat{\bs}_i^{(0)}=(\widehat{\bx}_i^{(0)},\widehat{\bc}_i^{(0)})$ for each data point.
That is, all observed missing entries are initially treated as observation-induced missingness. 
Starting from $\{\widehat{\bs}_i^{(0)}\}_{i=1}^n$, \algname\ alternates between the following two steps at each iteration $t$: $(i)$ \textit{Model Update:} train a diffusion model on the given collection of joint data state $\{\widehat{\bs}_i^{(t)}\}_{i=1}^n$ to capture dependencies between data values and meaningful-missingness patterns; and $(ii)$
\textit{Latent-state Update:} draw multiple conditional samples from the current diffusion model and update the joint state to $\{\widehat{\bs}_i^{(t+1)}\}_{i=1}^n$. In the following, we describe the latent-state update and present the overall procedure in Algorithm~\ref{alg:joint_diffusion_overall}. Detailed algorithms for each step are provided in Appendix~\ref{app:alg}.


\vspace{-0.1in}
\paragraph{Latent-state Update.} At each iteration $t$, conditioning on each observed training data $\bx^{\mathrm{obs}}$, we use the current updated joint-state diffusion model (parameterized by $\boldsymbol \theta^{(t)}$) to draw $K$ samples
$\{(\bx^{(t,k)},\bc^{(t,k)})\}_{k=1}^K$. 
Here $K$ is a pre-specified sample size.  
We then aggregate these $K$ conditional samples to update both the imputed values and the meaningful-missingness mask for each data point.



First, for each missing $j$-th entry in $\bx^\obs$, we update its imputed value to $\widehat{x}_{j}^{(t+1)}$ as
\begin{equation}
\label{eq:aggregate_data}
\widehat{x}_{j}^{(t+1)}
=
\begin{cases}
\operatorname{mode}\bigl(\{x_{j}^{(t,1)},\ldots,x_{j}^{(t,K)}\}\bigr),
& \text{if feature } j \text{ is discrete},\\[6pt]
\frac{1}{K}\sum_{k=1}^K x_{j}^{(t,k)},
& \text{if feature } j \text{ is continuous},
\end{cases}
\end{equation}
where $x_j^{(t,k)}$ denotes the $j$-th entry in the generated conditional sample $\bx^{(t,k)}$.

Then, for each $j$ such that the $j$-th entry of $\bx^\obs$ is missing, to quantify how uncertain the model is about this missing entry, we define the following uncertainty score:
\begin{equation}
\label{eq:uncertainty}
u_{j}^{(t+1)}
=
\begin{cases}
-\displaystyle\sum_{v\in\mathcal{X}_j}
\hat{p}_{j}^{(t)}(v)\log \hat{p}_{j}^{(t)}(v),
& \text{if feature } j \text{ is discrete},\\[12pt]
\sqrt{\frac{1}{K}\sum_{k=1}^K
(x_{j}^{(t,k)} -\frac{1}{K}\sum_{\ell=1}^K x_{j}^{(t,\ell)})^2
},
& \text{if feature } j \text{ is continuous},
\end{cases}
\end{equation}
where
$
\hat{p}_{j}(v)=\frac{1}{K}\sum_{k=1}^K \mathbbm{1}(x_{j}^{(t,k)}=v)
$
is the empirical probability mass function of these $K$ samples. 
That is, the uncertainty score is defined as empirical entropy for discrete variables and empirical standard deviation for continuous ones. In both cases, a larger uncertainty score indicates greater posterior uncertainty about the missing entry, \ie, the generated conditional samples are more diverse.


Finally, we compute the imputation \eqref{eq:aggregate_data} and the uncertainty score \eqref{eq:uncertainty} for missing entries in each data point $\bx_i^\obs$, $i=1,\ldots,n$. That is, let $\mathcal{M}_j \subset \{1,\ldots,n\}$ denote the set of samples whose $j$-th observed entry is $\na$, then the above calculation yields the uncertainty score
$\{u_{i,j}^{(t+1)}:i\in\mathcal{M}_j\}$ for the $j$-th variable. We then apply \algnamekmean{} to separate these entries with {\it high} uncertainty from those with {\it low} uncertainty. Specifically, we denote the two resulting clusters by $\mathcal{C}_{0,j}^{(t+1)}$ and $\mathcal{C}_{1,j}^{(t+1)}$. Without loss of generality, we assume $\mathcal{C}_{1,j}^{(t+1)}$ is the cluster with higher average uncertainty values, thus treated as the meaningful-missingness cluster.
In parallel, the sampled masks provide a direct MM signal through the majority vote that equals
$\operatorname{mode}\bigl(\{ c_{i,j}^{(t,k)}\}_{k=1}^K\bigr).$
We combine these two signals conservatively to update the second part of the joint state:
\begin{equation}
\label{eq:mm_update}
\widehat c_{i,j}^{(t+1)}
=
\mathbbm{1}\{i\in\mathcal C_{1,j}^{(t+1)}\}
\lor
\operatorname{mode}\bigl(\{c_{i,j}^{(t,k)}\}_{k=1}^K\bigr).
\end{equation}
Here $\lor$ denotes the logical OR operator.
Thus, an entry is classified as meaningfully missing if either the sampled masks or the uncertainty pattern supports the MM interpretation.

The overall procedure can be viewed as an iterative scheme: the model-update step refits the diffusion model using the current joint-state estimates, while the latent-state-update step refines the MM indicators and imputed values using the learned model. Ablation studies in Appendix~\ref{app:ablation} show that the full proposed algorithm outperforms both related baseline method and variants with individual components removed, demonstrating that iterative refinement, joint-state characterization, and the uncertainty-based aggregation rule are all essential and contribute to its strong performance.

\vspace{-0.1in}
\begin{algorithm}[t!]
\caption{\algname{} for Selective Imputation}
\label{alg:joint_diffusion_overall}
\begin{algorithmic}[1]
\REQUIRE Observed data table $X^{\mathrm{obs}}=\{\bx_i^{\text{obs}}\}_{i\in [n]}$, 
observation mask $\Omega=\{\boldsymbol{\omega}_i\}_{i\in [n]}$,
MM search column set $\mathcal{T}$, number of samples $K$, number of iterations $t_{\max}$.
\ENSURE Final completed data $\widehat{X}$, meaningful-missingness mask $\widehat{C}$, and trained model parameters $\boldsymbol \theta$.

\STATE \textbf{Initialize:} randomly fill the missing entries in $X^{\mathrm{obs}}$ to obtain $\widehat{X}^{(0)}$.
\STATE Set the initial meaningful-missingness mask $\widehat{C}^{(0)} \gets \mathbf{0}_{n\times d}$.
\STATE Form the initial joint representation $\widehat{S}^{(0)} \gets (\widehat{X}^{(0)}, \widehat{C}^{(0)})$.

\FOR{$t=0,1,2,\dots, t_{\max}$}
    \STATE $\boldsymbol \theta^{(t)} \gets \textsc{Model--Update}(\widehat{S}^{(t)})$ \hfill using Algorithm~\ref{alg:mstep_modular}
    \STATE $\mathcal{S}^{(t)} \gets \textsc{Conditional--Sample}(\boldsymbol \theta^{(t)}, X^{\mathrm{obs}}, \Omega, K)$ \hfill using Algorithm~\ref{alg:estep_modular}
    \STATE $(\widehat{X}^{(t+1)}, \widehat{C}^{(t+1)}) \gets \textsc{Aggregate}(\mathcal{S}^{(t)}, \Omega, \mathcal{T})$ \hfill using Algorithm~\ref{alg:aggregate_modular}
    \STATE Form $\widehat{S}^{(t+1)} \gets (\widehat{X}^{(t+1)}, \widehat{C}^{(t+1)})$.
\ENDFOR
\STATE $\widehat X\gets \widehat{X}^{(t+1)}\odot(1-\widehat{C}^{(t+1)})+  \na\odot\widehat{C}^{(t+1)}$.
\RETURN $(\widehat{X}, \widehat{C}^{(t+1)}, \boldsymbol \theta^{(t)})$.
\end{algorithmic}
\end{algorithm}

\subsection{Identifiability of Meaningful Missingness}
\label{sec:identifiability_theory}



In this section, we provide theoretical insights into the identifiability of meaningful missingness when it is mixed with observation-induced missingness in the observed record. The following Proposition formalizes two representative regimes under which meaningful missingness can be identified. The proof of Proposition~\ref{prop:mar+known_mcar} is provided in Appendix~\ref{app:proof1}. 
Intuitively, the following proposition shows that meaningful missingness is identifiable when the observation-induced missingness channel is either known or can be recovered from the table's structural information.
The two regimes are chosen and analyzed for technical simplicity, and the same principle should extend to more general settings where the observation-induced missingness is itself identifiable or can be separated from the MM mechanism through auxiliary structure, validation information, or certain parametric conditions. Our numerical experiments in Section~\ref{sec:numerical} further show that the proposed algorithm remains fairly robust and maintains stable performance across various observation-induced missingness mechanisms.



\begin{proposition}[Identifiability]\label{prop:mar+known_mcar}
Assume that the meaningful missing mechanism satisfies $\Pr(\mathbf{c}_j=1\mid \mathbf{x}_{-j}^{\true})
= \phi_{\theta}(\mathbf{x}_{-j}^{\true};j)$ with unknown parameter $\theta$, and the function $\phi_\theta$ satisfies that $\phi_{\theta}\neq\phi_{\theta'}$ for any $\theta\neq \theta'$. We consider the following two scenarios. 
\begin{enumerate}[leftmargin=2em, itemsep=2pt, topsep=0pt, parsep=0pt, partopsep=0pt]
    \item[($i$)] Assume the observation-induced missingness is MCAR with known probabilities $p_j = \Pro(r_j=1)<1$ for all $j$. Then the meaningful missing is identifiable from the observed data $\mathbf{x}^{\mathrm{obs}}$, i.e., $\theta$ can be uniquely determined.
    \item[($ii$)] Assume that there exists a known subset $S\subseteq\{1,\ldots,d\}$, $S^c\neq\emptyset$, such that $\Pro(c_j=1)=0$ for $j\notin S$. That is, meaningful missingness will only occur in the columns in $S$. Assume further that the observation-induced missingness is MCAR with a common but unknown probability $p_j=p<1$, $j=1,\ldots,d$. Then the meaningful missing is identifiable from the observed data $\mathbf{x}^{\mathrm{obs}}$, i.e., $\theta$ can be uniquely determined. 
\end{enumerate}
\end{proposition}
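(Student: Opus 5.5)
The plan is to show that the observed-data distribution determines the function $\phi_\theta$, and then to invoke the injectivity assumption $\phi_\theta\neq\phi_{\theta'}$ for $\theta\neq\theta'$ to pin down $\theta$. I will work throughout with quantities that are functionals of the law of $\bx^{\obs}$. The basic one is the probability that entry $j$ is observed, jointly with the values of the other observed entries.

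Under MCAR, $\mathbf r$ is independent of $(\bx^{\true},\mathbf c)$, and I will take its coordinates to be independent. Fix $j$ and condition on the event that all other entries are observed, i.e.\ $\omega_k=1$ for $k\neq j$. On this event $\bx_{-j}^{\obs}=\bx_{-j}^{\true}$, although this requires $c_k=0$ for $k\neq j$. For each configuration $z$ of the other coordinates, the observed data give
\[
\Pro\bigl(x_j^{\obs}=\na,\ \bx^{\obs}_{-j}=z\bigr)
\quad\text{and}\quad
\Pro\bigl(x_j^{\obs}\neq\na,\ \bx^{\obs}_{-j}=z\bigr),
\]
where $z$ ranges over regular values. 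The second quantity equals $(1-p_j)\prod_{k\neq j}(1-p_k)\,\Pro(c_j=0,\bx_{-j}^{\true}=z)$. The first equals $\prod_{k\neq j}(1-p_k)\,\Pro(\bx_{-j}^{\true}=z)\bigl[\phi_\theta(z;j)+p_j(1-\phi_\theta(z;j))\bigr]$, since $\{x_j^{\obs}=\na\}=\{c_j=1\}\cup\{r_j=1\}$. Taking the ratio removes the unknown marginal of $\bx^{\true}_{-j}$ and the factors for the other columns, which gives
\[
\frac{\Pro(x_j^{\obs}=\na\mid \bx^{\obs}_{-j}=z)}{\Pro(x_j^{\obs}\neq\na\mid \bx^{\obs}_{-j}=z)}=\frac{\phi_\theta(z;j)+p_j(1-\phi_\theta(z;j))}{(1-p_j)(1-\phi_\theta(z;j))}.
\]
The right-hand side is strictly increasing in $\phi_\theta(z;j)$ because $p_j<1$. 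It is therefore invertible, and $\phi_\theta(z;j)$ is determined for every $z$ in the support of the regular values. This proves (i) once $p_j$ is known. One caveat must be handled: the conditioning event involves $\bx_{-j}^{\obs}$ containing no $\na$, so I am identifying $\phi_\theta$ on the set where the other coordinates are regular. Identifiability then relies on $\phi_\theta\neq\phi_{\theta'}$ being witnessed on that set, or else I will extend the argument to configurations $z$ that contain $\na$ in columns of $S$ by the same ratio trick. I will state this reading of the injectivity assumption explicitly.

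For (ii), the only extra step is to recover the common rate $p$, after which I reduce to (i). Pick any $j_0\in S^c$. Because $\Pro(c_{j_0}=1)=0$, we have $\Pro(x_{j_0}^{\obs}=\na)=\Pro(r_{j_0}=1)=p$, so $p$ is a directly observable marginal. Since $p_j=p$ is then known for all $j$, part (i) applies verbatim. This is where the assumptions $S^c\neq\emptyset$ and common $p$ are used.

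I expect the main obstacle to be the rigor of the conditioning step in (i). It needs two things: independence of the $r_k$ across coordinates, so that the factors for the other columns cancel, and positivity, namely that $\Pro(\omega_k=1\ \forall k\neq j,\ \bx^{\true}_{-j}=z)>0$ on the relevant support, so that the conditional ratios are well defined. If the columns of $\mathbf r$ are not independent, I would instead use the full joint law, in which case the ratio is taken against the event $\{r_{-j}=0\}$ and the cancellation goes through the same way.
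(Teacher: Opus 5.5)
You have a genuine gap: your argument recovers $\phi_\theta(z;j)$ only at configurations $z$ with no $\na$ in the other coordinates, and neither of your proposed remedies covers the rest.

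\textbf{Where it fails.} Your computation on $\{\bx^{\obs}_{-j}=z\}$ with $z$ free of $\na$ is correct. It simplifies to $\Pro(x_j^{\obs}=\na\mid \bx^{\obs}_{-j}=z)=p_j+(1-p_j)\phi_\theta(z;j)$; invert this rather than the odds, which degenerate when $\phi_\theta(z;j)=1$.

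The hypothesis $\phi_\theta\neq\phi_{\theta'}$ is about functions on the whole support of $\bx^{\true}_{-j}$. That support lives in the augmented domain and typically contains $\na$ in other MM columns. In the paper's own Bayesian network, \texttt{D3} is meaningfully missing exactly when \texttt{D2} is $1$ or $\na$.

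Consider a family with $\phi_\theta(z;j)=\theta$ if $z_k=\na$ and $\phi_\theta(z;j)=1/2$ otherwise. It satisfies the hypothesis, yet your conditioning set sees nothing of $\theta$. So restricting the injectivity assumption to regular configurations proves a strictly weaker proposition.

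``The same ratio trick'' also does not extend. When $z_k=\na$, the event $\{x_k^{\obs}=\na\}$ is the union of $\{x_k^{\true}=\na\}$ and $\{x_k^{\true}\in\mathcal X_k,\ r_k=1\}$. Hence $\bigl(\Pro(x_j^{\obs}=\na\mid\bx^{\obs}_{-j}=z)-p_j\bigr)/(1-p_j)$ is an average of $\phi_\theta(z';j)$ over configurations $z'$ whose $k$-th coordinate is either $\na$ or some masked regular value. It is not $\phi_\theta(z;j)$ itself.

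\textbf{The fix.} The missing ingredient is to deconvolve the other coordinates through the known channel, which is the paper's route. Each $K_k$ is injective because $p_k<1$, hence so is $K=K_1\otimes\cdots\otimes K_d$. The entire law of $\bx^{\true}$ on the augmented domain is then identified, and $\phi_\theta(z;j)=\Pro(x_j^{\true}=\na\mid\bx^{\true}_{-j}=z)$ can be read off on the whole support.

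Concretely, $r_k$ is independent of $(\bx^{\true},\mathbf r_{-k})$. So for any event $E$ in the remaining coordinates (observed, or already recovered),
\begin{gather*}
\Pro(x_k^{\true}\in A,E)=\frac{\Pro(x_k^{\obs}\in A,E)}{1-p_k}\quad (A\subseteq\mathcal X_k),\\
\Pro(x_k^{\true}=\na,E)=\Pro(x_k^{\obs}=\na,E)-\frac{p_k}{1-p_k}\,\Pro(x_k^{\obs}\in\mathcal X_k,E).
\end{gather*}
Applying this one coordinate at a time recovers the law of $\bx^{\true}$.

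\textbf{Part (ii) and side remarks.} With this step in place, your part (ii) is correct: read $p=\Pro(x_{j_0}^{\obs}=\na)$ off a column $j_0\in S^c$ and reduce to (i). It is actually cleaner than the paper's (ii), which conditions on the unobserved $\bx^{\true}_{-j}$ and tacitly relies on the same deconvolution.

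Positivity of your conditioning events is automatic from $p_k<1$.

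Your fallback for dependent coordinates of $\mathbf r$ does not work. The conditional probability then involves $\Pro(r_j=1\mid \mathbf r_{-j}=\mathbf 0)$, which the marginals $p_j$ do not determine. Independence across coordinates is equally implicit in the paper's product channel.
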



\begin{remark}\label{remark:linear_regression}
We comment that Proposition~\ref{prop:mar+known_mcar} holds for general meaningfully-missing mechanisms $\phi_\theta$. One example that satisfies the assumption is the logistic scheme $\phi_{\theta}(\bx;j) = \sigma(\mathbf{\alpha}_j^\top \bx + \mathbf{\beta}_j)$, with parameters $\mathbf{\alpha}_j \in \mathbb{R}^d$ and $\beta_j \in \mathbb{R}$ identifiable under assumptions in Proposition~\ref{prop:mar+known_mcar}. We provide more discussions under this special case in the Appendix~\ref{app:proof1}.       
\end{remark}

\begin{remark}\label{remark:general}
Proposition~\ref{prop:mar+known_mcar} characterizes what is theoretically identifiable from the observed-data distribution, while practical recovery depends on the empirical effectiveness of the algorithm used.
The uncertainty score is our key design for separating MM entries from observation-induced missing entries in practice. It relies on a heuristic uncertainty-gap assumption: observation-induced missing entries correspond to regular values and tend to have concentrated conditional predictive distributions, whereas MM entries may not be well explained by any single regular state and thus tend to have larger uncertainty scores. We show in Appendix~\ref{app:theory-discuss} that, under certain conditions, the first-step iteration can guarantee that those MM entries exhibit higher uncertainty scores in expectation than non-MM entries, thus can be separated. Moreover, as the diffusion model better approximates the joint distribution, the $K$ conditional samples $\{\bc^{(t,k)}\}_{k=1}^K$ become more informative of the true MM indicators, enabling the logical OR update in Eq.~\eqref{eq:mm_update} to more effectively complement the uncertainty-gap-based update.



\end{remark}

\section{Numerical Experiments}\label{sec:numerical}

\paragraph{Datasets and Numerical Setup.}

We evaluate \algname\ on two mixed-type tabular datasets: a synthetic
Bayesian-network dataset \cite{ouyang2023missdiff} and a real-world dataset constructed from
MIMIC-IV-ED \cite{johnson2021mimic}. Table~\ref{tab:datasets_summary} in Appendix~\ref{app:data_mask_construction} summarizes the main statistics of the two datasets. 
In both datasets, we generate the ground-truth meaningful-missingness via a pre-specified mechanism, which allows us to evaluate whether a method can recover meaningful missingness from observed data. 
Specifically, in the Bayesian-network dataset, meaningful missingness is induced by the
synthetic data-generating process and can occur in both continuous and discrete variables.
In the MIMIC-IV-ED data, ground-truth meaningful-missingness labels are not directly available, so we introduce synthetic meaningful missingness through clinically motivated rules on selected discrete target variables. The detailed Bayesian-network construction and MIMIC-IV-ED feature construction are provided in Appendix~\ref{app:bn_details} and
Appendix~\ref{app:real_data_details}, respectively.
We then introduce an additional observation-induced missing layer.  We evaluate our method under missing completely at random (MCAR), missing at random (MAR), or missing not at random (MNAR) mechanisms. 
For the Bayesian-network dataset, we report results under all three ordinary-missingness mechanisms. 
For the MIMIC-IV-ED dataset, we use MCAR ordinary missingness with different masking ratios. 
The formal definitions of these observation-induced missingness mechanisms and their specific implementations are given in Appendix~\ref{app:ordinary_mechanism}.

\vspace{-0.1in}
\paragraph{Evaluation Criteria.}

We evaluate \algname\ along two dimensions: meaningful-missingness (MM)
identification and observation-induced-missing-value imputation. 
For MM identification, we report precision and recall over
the missing entries, $\mathrm{Precision}
=
\frac{\mathrm{TP}}{\mathrm{TP}+\mathrm{FP}}$, $\mathrm{Recall}
=
\frac{\mathrm{TP}}{\mathrm{TP}+\mathrm{FN}}$, 
where TP, FP, and FN are true positives, false positives, and false negatives, computed by treating meaningful missingness as the positive class.
For discrete variables, we additionally report token-level recovery accuracy, which is defined as $\mathrm{ACC}
=
\frac{1}{|\mathcal{M}_{\mathrm{disc}}|}
\sum_{(i,j)\in\mathcal{M}_{\mathrm{disc}}}
\mathbbm{1}\{\hat x_{i,j}=x_{i,j}\}$, where $\mathcal{M}_{\mathrm{disc}}$ denotes the set of missing entries in
discrete MM candidate columns; $x_{i,j} \in \bar{\calX}_j$ and $\hat x_{i,j}\in \bar{\calX}_j$ denote the true and imputed values, respectively. 
We note that this metric is stricter than MM-label accuracy: for non-MM entries, the imputed discrete value must also match the ground truth.
For continuous variables, we report RMSE (and MAE) on ordinary-missing entries in continuous columns that are not allowed to contain meaningful missingness. In all tables, ``out'' denotes test-set performance.




\begin{table}[t!]
\centering
\caption{Method comparison on Bayesian Network data under various missing mechanisms/ratios.}
\label{tab:bnc_ratio_method_compare}
\setlength{\tabcolsep}{4.5pt}
\footnotesize
\begin{tabular}{lccccccc}
\toprule
Method & Ratio (\%) & \multicolumn{2}{c}{MCAR} & \multicolumn{2}{c}{MAR} & \multicolumn{2}{c}{MNAR} \\
\cmidrule(lr){3-4}\cmidrule(lr){5-6}\cmidrule(lr){7-8}
& & ACC$_{\text{out}} \uparrow$ & RMSE$_{\text{out}} \downarrow$
& ACC$_{\text{out}} \uparrow$ & RMSE$_{\text{out}} \downarrow$
& ACC$_{\text{out}} \uparrow$ & RMSE$_{\text{out}} \downarrow$ \\
\midrule
\algname{}   & 10 & \textbf{78.34\%} & 5.09 & \textbf{75.32\%} & 5.15 & \textbf{72.51\%} & 5.16 \\
CMAE         & 10 & 35.21\% & \textbf{4.52} & 24.67\% & \textbf{4.98} & 30.43\% & \textbf{5.05} \\
DiffPuter    & 10 & 31.74\% & 5.07 & 23.54\% & 5.26 & 26.90\% & 5.29 \\
missForest   & 10 & 30.40\% & 4.76 & 19.12\% & 5.17 & 23.26\% & 5.16 \\
Mean/Mode    & 10 & 30.40\% & 4.85 & 19.12\% & 5.04 & 23.26\% & 5.08 \\
\midrule
\algname{}   & 20 & \textbf{69.35\%} & 5.08 & \textbf{66.16\%} & 5.24 & \textbf{65.85\%} & \textbf{4.88} \\
CMAE         & 20 & 47.18\% & \textbf{4.74} & 34.25\% & \textbf{5.03} & 42.14\% & 5.07 \\
DiffPuter    & 20 & 44.61\% & 5.20 & 34.79\% & 5.32 & 37.48\% & 5.36 \\
missForest   & 20 & 42.40\% & 5.01 & 30.98\% & 5.16 & 34.44\% & 5.14 \\
Mean/Mode    & 20 & 42.40\% & 4.96 & 30.98\% & 5.07 & 34.44\% & 5.07 \\
\midrule
\algname{}   & 30 & \textbf{63.65\%} & 5.06 & \textbf{57.06\%} & 5.33 & \textbf{67.91\%} & 5.32 \\
CMAE         & 30 & 52.26\% & \textbf{4.88} & 40.83\% & 5.06 & 45.29\% & 5.08 \\
DiffPuter    & 30 & 51.89\% & 5.28 & 43.48\% & 5.24 & 45.04\% & 5.27 \\
missForest   & 30 & 49.23\% & 5.04 & 41.19\% & 6.40 & 42.40\% & 5.20 \\
Mean/Mode    & 30 & 49.23\% & 5.02 & 41.19\% & \textbf{5.05} & 42.40\% & \textbf{5.05} \\
\midrule
\algname{}   & 40 & \textbf{65.92\%} & \textbf{4.99} & \textbf{67.99\%} & 5.42 & \textbf{60.38\%} & 5.21 \\
CMAE         & 40 & 54.09\% & 5.44 & 45.15\% & 5.06 & 50.30\% & 5.05 \\
DiffPuter    & 40 & 56.18\% & 5.20 & 45.27\% & 5.27 & 49.47\% & 5.22 \\
missForest   & 40 & 53.83\% & 5.21 & 42.08\% & 5.13 & 48.26\% & 5.12 \\
Mean/Mode    & 40 & 53.83\% & 5.00 & 42.08\% & \textbf{5.06} & 48.26\% & \textbf{5.02} \\
\bottomrule
\end{tabular}
\vspace{-0.1in}
\end{table}

\vspace{-0.1in}
\paragraph{Baselines.}
We compare \algname\ against representative baselines from four families: simple statistical imputation, classical iterative imputation, masked autoencoding, and diffusion-based generative imputation. Mean/Mode imputes each column independently using the empirical mean for continuous variables and the empirical mode for categorical variables. We use missForest~\cite{stekhoven2012missforest} as a strong tree-based iterative baseline for mixed-type tabular data, providing a computationally practical classical alternative on the large-scale MIMIC-IV-ED dataset. As modern deep-learning baselines, we include CACTI~\cite{gorla2025cacti}, a recent strong masked-autoencoding method reported to improve over several prior autoencoding baselines, and  DiffPuter~\cite{zhang2025diffputer}, a recent diffusion-based imputation method reported to outperform several prior diffusion baselines. On the Bayesian Network dataset, whose column names carry no semantic information, we use the non-embedding variant of CACTI, denoted CMAE. These baselines are designed to impute missing values rather than to identify meaningful missingness; therefore, their MM precision and recall are not measurable and we only report their imputation error. Additional implementation details are provided in Appendix~\ref{app:exp-details-baselines}.

\vspace{-0.1in}
\paragraph{Results on the Synthetic Dataset.}

\begin{wraptable}{r}{0.4\linewidth}
\centering
\vspace{-0.15in}
\caption{Evaluation of \algname{} on the Bayesian Network dataset under ordinary MCAR with five random seeds.}
\label{tab:bayesian_missing_summary_pr_mcar}
\scriptsize
\begin{tabular}{ccc}
\toprule
Ratio & Recall$_{\text{out}} \uparrow$ & Precision$_{\text{out}} \uparrow$\\
\midrule
10 & ${92.02\% \pm 1.34\%}$ & ${73.45\% \pm 0.34\%}$  \\
\midrule
20 & ${84.19\% \pm 2.86\%}$ & ${59.33\% \pm 1.19\%}$ \\
\midrule
 30 & ${76.55\% \pm 2.55\%}$ & ${50.80\% \pm 2.50\%}$  \\
\midrule
40 & ${76.99\% \pm 3.48\%}$ & ${43.54\% \pm 2.26\%}$ \\
\bottomrule
\end{tabular}
\vspace{-0.1in}
\end{wraptable}
Table~\ref{tab:bnc_ratio_method_compare} reports out-of-sample results on the Bayesian-network synthetic dataset under three different types of observation-induced missing: MCAR, MAR, and MNAR. It is worthwhile noting that \algname\ is the only method that explicitly identifies meaningful missingness. 
In terms of discrete token-level recovery, \algname\ achieves the best accuracy across all ratios, substantially outperforming
standard imputation baselines. This shows that modeling $\na$ as a
semantic state improves recovery of the final entry state. Moreover, \algname\ maintains stable MM recovery performance, indicating that the uncertainty-based update is not tied to a specific
missing pattern. For continuous imputation, \algname\ is competitive
but not always the best point imputer, since its objective jointly balances
ordinary-value recovery and MM identification. 
Furthermore, Table~\ref{tab:bayesian_missing_summary_pr_mcar} presents the MM precision and recall of \algname{} and it can be seen that \algname{} achieves consistently high MM recall
($92.02\%$ to $76.99\%$) as the observation-induced missing ratio increases from
$10\%$ to $40\%$. Precision decreases at higher missing ratios, reflecting the increasing difficulty of distinguishing semantic absence from randomly masked entries. Additional precision and recall results under other missing mechanisms and datasets are provided in Appendix~\ref{app:add-results}.


We further evaluate the effectiveness of \algname{} via two downstream multi-class classification tasks for target variables $\texttt{D2}$ and $\texttt{D3}$. The results are reported in Table~\ref{tab:bayesian_network_continuous_d2_d3} using Macro-F1, Weighted-F1, ROC-AUC, and accuracy as classification metrics. More details can be found in Appendix~\ref{app:imple-detail}. It can be seen that the proposed method achieves the best performance among all baseline methods, indicating that the selective imputation can significantly improve the downstream task performance when $\na$ is explicitly considered as a meaningfully missing state.

\begin{table}[!t]
\centering
\caption{Downstream performance on Bayesian Network for variables $\texttt{D2}$ and $\texttt{D3}$.}
\label{tab:bayesian_network_continuous_d2_d3}
\footnotesize
\setlength{\tabcolsep}{3.5pt}
\begin{adjustbox}{max width=\textwidth}
\begin{tabular}{lccccccccc}
\toprule
\multirow{2}{*}{Method} 
& \multirow{2}{*}{Ratio} 
& \multicolumn{4}{c}{$\texttt{D2}$} 
& \multicolumn{4}{c}{$\texttt{D3}$} \\
\cmidrule(lr){3-6} \cmidrule(lr){7-10}
& 
& Macro-F1 & ROC-AUC & Weighted-F1 & Acc.
& Macro-F1 & ROC-AUC & Weighted-F1 & Acc. \\
\midrule
\algname & 10 
& \textbf{75.54} & \textbf{94.81} & \textbf{71.42} & \textbf{89.38}
& \textbf{79.16} & \textbf{86.01} & \textbf{77.48} & \textbf{84.72} \\
Mean/Mode & 10 
& 46.50 & 73.38 & 40.21 & 72.65
& 41.69 & 52.92 & 36.32 & 62.23 \\
missForest & 10 
& 45.70 & 72.12 & 39.24 & 72.39
& 41.74 & 52.94 & 36.38 & 62.27 \\
CMAE & 10 
& 40.82 & 70.39 & 32.02 & 70.10
& 38.85 & 56.64 & 32.54 & 61.42 \\
DiffPuter & 10 
& 45.97 & 74.01 & 39.58 & 72.52
& 41.44 & 55.33 & 36.03 & 62.10 \\
\midrule
\algname & 20 
& \textbf{75.44} & \textbf{95.26} & \textbf{71.23} & \textbf{89.72}
& \textbf{79.16} & \textbf{86.25} & \textbf{77.48} & \textbf{84.72} \\
Mean/Mode & 20 
& 40.03 & 69.54 & 32.47 & 71.25
& 41.74 & 52.83 & 36.38 & 62.27 \\
missForest & 20 
& 39.59 & 68.44 & 31.97 & 70.83
& 41.74 & 52.88 & 36.38 & 62.25 \\
CMAE & 20 
& 42.10 & 72.79 & 33.48 & 70.72
& 40.88 & 54.82 & 35.12 & 62.25 \\
DiffPuter & 20 
& 46.38 & 73.92 & 40.07 & 72.62
& 41.45 & 53.98 & 36.04 & 62.13 \\
\midrule
\algname & 30 
& \textbf{76.55} & \textbf{95.49} & \textbf{72.57} & \textbf{90.00}
& \textbf{78.89} & \textbf{85.91} & \textbf{77.17} & \textbf{84.65} \\
Mean/Mode & 30 
& 37.12 & 68.25 & 28.97 & 70.75
& 41.21 & 53.21 & 35.76 & 62.00 \\
missForest & 30 
& 36.54 & 66.04 & 28.36 & 69.90
& 40.97 & 53.23 & 35.49 & 61.88 \\
CMAE & 30 
& 45.25 & 72.42 & 37.12 & 71.85
& 41.66 & 56.51 & 36.26 & 62.23 \\
DiffPuter & 30 
& 46.47 & 74.72 & 40.17 & 72.65
& 41.59 & 53.68 & 36.21 & 62.18 \\
\midrule
\algname & 40 
& \textbf{61.11} & \textbf{94.77} & \textbf{53.94} & \textbf{86.15}
& \textbf{75.13} & \textbf{86.42} & \textbf{72.73} & \textbf{83.15} \\
Mean/Mode & 40 
& 35.11 & 70.68 & 26.49 & 70.89
& 35.27 & 53.15 & 28.83 & 59.69 \\
missForest & 40 
& 33.97 & 68.49 & 25.22 & 69.94
& 34.76 & 52.96 & 28.24 & 59.42 \\
CMAE & 40 
& 38.85 & 69.95 & 29.62 & 69.08
& 39.83 & 63.21 & 33.43 & 62.23 \\
DiffPuter & 40 
& 43.23 & 74.30 & 36.11 & 72.18
& 41.62 & 53.33 & 36.23 & 62.23 \\
\bottomrule
\end{tabular}
\end{adjustbox}
\end{table}

\paragraph{Results on the MIMIC-IV-ED Dataset.}

\begin{table}[t!]
\centering
\caption{Evaluation on MIMIC-IV-ED under MCAR.}
\label{tab:mimic4ed_missing_summary_main}
\setlength{\tabcolsep}{3.2pt}
\small
\begin{tabular}{lcccc}
\toprule
Method & Ratio & MAE$_{\text{out}} \downarrow$ & RMSE$_{\text{out}} \downarrow$ & Acc$_{\text{out}} \uparrow$ \\
\midrule
\algname    & 10 & $\underline{4.59 \pm 0.03}$ & $8.31 \pm 0.07$ & $\mathbf{72.00\% \pm 0.50\%}$ \\
DiffPuter   & 10 & $4.61 \pm 0.04$ & $\underline{8.30 \pm 0.08}$ & $61.54\% \pm 0.78\%$ \\
CACTI       & 10 & $\mathbf{4.55 \pm 0.03}$ & $\mathbf{8.24 \pm 0.07}$ & $\underline{63.85\% \pm 0.61\%}$ \\
Mean/Mode   & 10 & $6.32 \pm 0.00$ & $11.39 \pm 0.00$ & $57.98\% \pm 0.00\%$ \\
missForest  & 10 & $4.69 \pm 0.02$ & $8.60 \pm 0.03$ & $62.29\% \pm 0.12\%$ \\
\midrule
\algname    & 20 & $\underline{4.78 \pm 0.03}$ & $\underline{8.68 \pm 0.04}$ & $\mathbf{72.46\% \pm 0.20\%}$ \\
DiffPuter   & 20 & $4.80 \pm 0.04$ & $8.68 \pm 0.07$ & $69.00\% \pm 0.71\%$ \\
CACTI       & 20 & $\mathbf{4.73 \pm 0.03}$ & $\mathbf{8.60 \pm 0.06}$ & $\underline{70.48\% \pm 0.59\%}$ \\
Mean/Mode   & 20 & $6.31 \pm 0.00$ & $11.37 \pm 0.00$ & $65.32\% \pm 0.00\%$ \\
missForest  & 20 & $5.02 \pm 0.02$ & $9.23 \pm 0.03$ & $69.63\% \pm 0.14\%$ \\
\midrule
\algname    & 30 & $\underline{4.99 \pm 0.03}$ & $\underline{9.05 \pm 0.05}$ & $71.84\% \pm 0.66\%$ \\
DiffPuter   & 30 & $5.01 \pm 0.03$ & $9.13 \pm 0.07$ & $71.39\% \pm 0.84\%$ \\
CACTI       & 30 & $\mathbf{4.94 \pm 0.03}$ & $\mathbf{9.04 \pm 0.06}$ & $\mathbf{72.71\% \pm 0.57\%}$ \\
Mean/Mode   & 30 & $6.31 \pm 0.00$ & $11.38 \pm 0.00$ & $68.26\% \pm 0.00\%$ \\
missForest  & 30 & $5.36 \pm 0.02$ & $9.86 \pm 0.04$ & $\underline{72.07\% \pm 0.15\%}$ \\
\midrule
\algname    & 40 & $\underline{5.20 \pm 0.03}$ & $\underline{9.39 \pm 0.05}$ & $69.54\% \pm 0.52\%$ \\
DiffPuter   & 40 & $5.27 \pm 0.03$ & $9.50 \pm 0.06$ & $73.11\% \pm 0.89\%$ \\
CACTI       & 40 & $\mathbf{5.05 \pm 0.03}$ & $\mathbf{9.25 \pm 0.06}$ & $\underline{73.52\% \pm 0.64\%}$ \\
Mean/Mode   & 40 & $6.32 \pm 0.00$ & $11.39 \pm 0.00$ & $69.77\% \pm 0.00\%$ \\
missForest  & 40 & $5.70 \pm 0.02$ & $10.44 \pm 0.04$ & $\mathbf{73.92\% \pm 0.17\%}$ \\
\bottomrule
\end{tabular}
\end{table}

Table~\ref{tab:mimic4ed_missing_summary_main} presents the results on the MIMIC-IV-ED dataset under MCAR-type observation-induced missing with varying missing ratios. \algname{} still achieves the best accuracy in terms of discrete token-level recovery for relatively small missing ratios, and it is the only method that can explicitly identify meaningful missingness. For continuous imputation, even though Diff-Joint does not always have the best performance, it remains competitive and stays as the second-best result in most cases, with very little difference from the best-performing method. This demonstrates that \algname{} enjoys a much better tradeoff between imputation performance for continuous features and the overall accuracy for discrete features. 

Furthermore, we also evaluate downstream predictive performance in 
Table~\ref{tab:mimic4ed_four_outcomes_combined}, where each method is used as a
preprocessing step before predicting four target discrete outcome variables.
\algname{} achieves consistently stronger Macro-F1 and ROC-AUC across the
outcomes, with especially large gains on predicting $\texttt{Critical}$ and $\texttt{ICU transfer 12h}$. This suggests that preserving meaningful missingness provides useful predictive signal for clinically severe outcomes, rather than simply improving cell-level imputation.

\begin{table}[tb!]
\centering
\caption{Downstream performance on MIMIC-IV-ED across four target outcome variables.}
\label{tab:mimic4ed_four_outcomes_combined}
\scriptsize
\setlength{\tabcolsep}{2.6pt}
\renewcommand{\arraystretch}{0.86}

\vspace{2pt}

\begin{adjustbox}{max width=\textwidth}
\begin{tabular}{l c cccc cccc}
\toprule
\multirow{2}{*}{Method} 
& \multirow{2}{*}{Ratio}
& \multicolumn{4}{c}{Hospitalization}
& \multicolumn{4}{c}{Critical} \\
\cmidrule(lr){3-6}\cmidrule(lr){7-10}
& & Macro-F1 & ROC-AUC & Weighted-F1 & Acc.
& Macro-F1 & ROC-AUC & Weighted-F1 & Acc. \\
\midrule
\algname & 10
& \textbf{49.14} & \textbf{70.98} & \textbf{43.71} & 61.66
& \textbf{88.25} & \textbf{98.56} & \textbf{84.58} & \textbf{95.92} \\
Mean/Mode & 10
& 40.45 & 66.37 & 34.03 & 58.97
& 51.28 & 76.26 & 34.35 & 88.56 \\
missForest & 10
& 45.36 & 68.38 & 38.41 & 63.75
& 47.93 & 78.37 & 28.81 & 88.50 \\
DiffPuter & 10
& 45.18 & 68.01 & 38.19 & 63.62
& 48.06 & 79.12 & 29.18 & 88.44 \\
CACTI & 10
& 45.65 & 68.44 & 38.68 & \textbf{64.11}
& 48.52 & 79.60 & 29.64 & 88.72 \\
\midrule
\algname & 20
& \textbf{48.99} & \textbf{70.36} & \textbf{44.34} & 57.95
& \textbf{90.11} & \textbf{98.35} & \textbf{86.79} & \textbf{96.87} \\
Mean/Mode & 20
& 37.20 & 64.71 & 31.12 & 55.83
& 53.75 & 75.81 & 38.32 & 88.42 \\
missForest & 20
& 45.40 & 68.28 & 38.46 & 63.81
& 48.86 & 78.27 & 30.23 & 88.80 \\
DiffPuter & 20
& 45.03 & 67.94 & 38.08 & 63.54
& 48.47 & 78.91 & 30.12 & 88.52 \\
CACTI & 20
& 45.67 & 68.60 & 38.67 & \textbf{64.11}
& 49.12 & 79.40 & 30.69 & 88.78 \\
\midrule
\algname & 30
& \textbf{49.24} & \textbf{71.42} & \textbf{45.22} & 54.63
& \textbf{85.82} & \textbf{98.70} & \textbf{80.84} & \textbf{96.04} \\
Mean/Mode & 30
& 34.95 & 63.81 & 29.11 & 54.52
& 54.01 & 75.36 & 38.76 & 88.42 \\
missForest & 30
& 45.58 & 68.69 & 38.58 & \textbf{64.19}
& 48.02 & 78.18 & 28.88 & 88.63 \\
DiffPuter & 30
& 44.91 & 67.88 & 37.94 & 63.41
& 48.19 & 79.08 & 29.57 & 88.31 \\
CACTI & 30
& 45.50 & 68.30 & 38.53 & 63.88
& 48.72 & 79.71 & 30.10 & 88.59 \\
\midrule
\algname & 40
& 43.32 & \textbf{70.37} & \textbf{40.36} & 45.21
& \textbf{81.79} & \textbf{97.86} & \textbf{75.29} & \textbf{95.14} \\
Mean/Mode & 40
& 31.54 & 63.15 & 26.05 & 52.22
& 52.25 & 76.43 & 36.23 & 88.14 \\
missForest & 40
& 45.46 & 68.64 & 38.44 & 64.07
& 48.24 & 78.19 & 29.26 & 88.60 \\
DiffPuter & 40
& 45.12 & 68.07 & 38.16 & 63.73
& 48.56 & 79.21 & 29.88 & 88.47 \\
CACTI & 40
& \textbf{45.73} & 68.60 & 38.74 & \textbf{64.21}
& 49.08 & 79.76 & 30.54 & 88.90 \\
\bottomrule
\end{tabular}
\end{adjustbox}

\vspace{4pt}

\begin{adjustbox}{max width=\textwidth}
\begin{tabular}{l c cccc cccc}
\toprule
\multirow{2}{*}{Method} 
& \multirow{2}{*}{Ratio}
& \multicolumn{4}{c}{ICU Transfer 12h}
& \multicolumn{4}{c}{CCI CHF} \\
\cmidrule(lr){3-6}\cmidrule(lr){7-10}
& & Macro-F1 & ROC-AUC & Weighted-F1 & Acc.
& Macro-F1 & ROC-AUC & Weighted-F1 & Acc. \\
\midrule
\algname & 10
& \textbf{90.99} & \textbf{99.65} & \textbf{87.81} & \textbf{97.32}
& \textbf{47.92} & \textbf{83.44} & \textbf{33.32} & \textbf{80.60} \\
Mean/Mode & 10
& 51.66 & 76.88 & 34.71 & 88.71
& 36.64 & 73.24 & 19.13 & 79.59 \\
missForest & 10
& 46.79 & 77.55 & 26.73 & 88.46
& 42.30 & 74.19 & 26.75 & 80.20 \\
DiffPuter & 10
& 47.64 & 80.21 & 28.42 & 88.19
& 42.76 & 73.91 & 27.21 & 79.84 \\
CACTI & 10
& 48.11 & 80.77 & 28.99 & 88.48
& 43.15 & 74.40 & 27.97 & 80.23 \\
\midrule
\algname & 20
& \textbf{90.65} & \textbf{99.54} & \textbf{87.41} & \textbf{97.20}
& \textbf{47.50} & \textbf{82.62} & \textbf{32.58} & \textbf{80.13} \\
Mean/Mode & 20
& 53.25 & 74.85 & 37.31 & 88.74
& 35.53 & 73.22 & 17.66 & 79.45 \\
missForest & 20
& 46.94 & 77.59 & 27.07 & 88.32
& 42.42 & 74.00 & 27.25 & 79.72 \\
DiffPuter & 20
& 48.51 & 79.88 & 29.96 & 88.11
& 42.91 & 73.16 & 27.84 & 79.71 \\
CACTI & 20
& 49.03 & 80.34 & 30.62 & 88.35
& 43.53 & 73.60 & 28.60 & 80.05 \\
\midrule
\algname & 30
& \textbf{88.91} & \textbf{99.63} & \textbf{84.71} & \textbf{97.32}
& \textbf{48.08} & \textbf{82.69} & \textbf{33.32} & 79.32 \\
Mean/Mode & 30
& 51.72 & 76.17 & 35.17 & 88.52
& 33.61 & 71.50 & 15.06 & 79.12 \\
missForest & 30
& 46.92 & 78.63 & 27.06 & 88.31
& 42.02 & 72.15 & 26.46 & 79.99 \\
DiffPuter & 30
& 46.88 & 79.31 & 27.21 & 88.06
& 42.54 & 73.22 & 27.36 & 79.87 \\
CACTI & 30
& 47.32 & 79.84 & 27.70 & 88.38
& 43.27 & 73.85 & 28.09 & \textbf{80.26} \\
\midrule
\algname & 40
& \textbf{85.09} & \textbf{99.37} & \textbf{79.42} & \textbf{96.43}
& \textbf{46.30} & \textbf{82.44} & \textbf{30.84} & 78.92 \\
Mean/Mode & 40
& 50.38 & 76.49 & 33.31 & 88.22
& 34.44 & 73.06 & 16.20 & 79.19 \\
missForest & 40
& 48.52 & 77.94 & 29.17 & 89.35
& 41.90 & 72.09 & 26.27 & 79.93 \\
DiffPuter & 40
& 47.41 & 80.46 & 28.02 & 88.39
& 42.08 & 73.31 & 26.79 & 79.66 \\
CACTI & 40
& 48.03 & 81.03 & 28.69 & 88.72
& 42.67 & 73.84 & 27.28 & \textbf{80.12} \\
\bottomrule
\end{tabular}
\end{adjustbox}
\end{table}

\section{Conclusion}

We introduced \algname{}, an uncertainty-aware diffusion framework for selective imputation that distinguishes meaningfully missing (MM) entries from observation-induced missing entries. By jointly modeling tabular values and MM masks, \algname{} learns when to preserve $\na$ as a semantic state and when to recover a regular value. Experiments on synthetic Bayesian-network data and MIMIC-IV-ED show that the proposed method effectively identifies meaningful missingness and yields strong downstream predictive performance. One potential limitation of the current framework is that it assumes known MM candidate columns and a detectable uncertainty gap. Although our empirical results suggest that this gap is stable across various missing mechanisms, future work could relax the current identifiability conditions. Additionally, the iterative diffusion procedure is also more computationally expensive than non-iterative baselines. However, its cost can be controlled through the user-specified number of iterations, and our ablation study shows that only a small number of iterations is often sufficient. Future work will strengthen theoretical guarantees, improve scalability, and evaluate the framework on more real-world datasets.


\bibliographystyle{unsrt} 
\bibliography{reference}


\appendix

\section{Algorithm Details}\label{app:alg}

We provide the detailed algorithms for the \textsc{Model--Update} step (Algorithm~\ref{alg:mstep_modular}), \textsc{Conditional--Sample} step (Algorithm~\ref{alg:estep_modular}), and the \textsc{Aggregate} step (Algorithm~\ref{alg:aggregate_modular}) used in Algorithm~\ref{alg:joint_diffusion_overall}.

\begin{algorithm}[h!]
\caption{\textsc{Model--Update}: Model Update}
\label{alg:mstep_modular}
\begin{algorithmic}[1]
\REQUIRE Current joint representation $\widehat{S}^{(t)}=(\widehat{X}^{(t)},\widehat{C}^{(t)})$.
\ENSURE Updated diffusion model parameters $\boldsymbol \theta^{(t)}$.

\WHILE{not converged}
    \STATE Sample a joint training example $\mathbf{s} \sim p_{\widehat{S}^{(t)}}(\mathbf{s})$.
    \STATE Sample diffusion time $t_{\mathrm{diff}} \sim p(t)$.
    \STATE Sample Gaussian noise $\boldsymbol{\varepsilon}\sim\mathcal{N}(\mathbf{0},\mathbf{I})$.
    \STATE Form the noisy sample
    $
        \mathbf{s}_{t_{\mathrm{diff}}}
        =
        \mathbf{s}
        +
        \sigma(t_{\mathrm{diff}})\boldsymbol{\varepsilon}.
    $
    \STATE Compute the score-matching loss
    \[
        \ell(\boldsymbol \theta)
        =
        \left\|
        \boldsymbol{\epsilon}_{\boldsymbol \theta}\!\left(\mathbf{s}_{t_{\mathrm{diff}}},t_{\mathrm{diff}}\right)
        -
        \frac{-\boldsymbol{\varepsilon}}{\sigma(t_{\mathrm{diff}})}
        \right\|_2^2.
    \]
    \STATE Update $\boldsymbol \theta$ via Adam.
\ENDWHILE

\RETURN $\boldsymbol \theta^{(t)} \gets \boldsymbol \theta$.
\end{algorithmic}
\end{algorithm}

\begin{algorithm}[h!]
\caption{\textsc{Conditional--Sample}}
\label{alg:estep_modular}
\begin{algorithmic}[1]
\REQUIRE Diffusion model $\boldsymbol \theta^{(t)}$, observed data $X^{\mathrm{obs}}$, observation mask $\Omega$,
number of samples $K$, number of reverse steps $M$.
\ENSURE Monte Carlo sample collection $\mathcal{S}^{(t)}=\{(X_k,C_k)\}_{k=1}^K$.

\FOR{$k=1$ to $K$}
    \STATE Sample initial noise $\tilde{\bs}_{t_M}^{(k)} \sim \mathcal{N}(\mathbf{0},\sigma^2(t_M)\mathbf{I})$.
    \FOR{$i=M,M-1,\dots,1$}
        \STATE Sample Gaussian noise $\boldsymbol{\varepsilon} \sim \mathcal N(\mathbf 0,\mathbf I)$.
        \STATE Compute forward-noised observed part:
        $
        \bs_{t_{i-1}}^{\mathrm{forward},(k)}
        =
        \bs^{\mathrm{obs}}
        +
        \sigma(t_{i-1})
        \boldsymbol{\varepsilon}.
        $
        \STATE Compute reverse update using the learned diffusion model:
        \[
        \bs_{t_{i-1}}^{\mathrm{reverse},(k)}
        = \textsc{ReverseUpdate}_{\boldsymbol \theta^{(s)}}(\tilde{\bs}_{t_i}^{(k)}, t_i, t_{i-1}).
        \]
        \STATE Merge observed and missing coordinates:
        \[
        \tilde{\bs}_{t_{i-1}}^{(k)}
        = \boldsymbol{\omega}\odot \bs_{t_{i-1}}^{\mathrm{forward},(k)}
        + (1-\boldsymbol{\omega})\odot \bs_{t_{i-1}}^{\mathrm{reverse},(k)}.
        \]
    \ENDFOR
    \STATE Decode $\tilde{\bs}_{t_0}^{(k)}$ into completed sample $(X_k,C_k)$.
\ENDFOR

\RETURN $\mathcal{S}^{(t)}=\{(X_k,C_k)\}_{k=1}^K$.
\end{algorithmic}
\end{algorithm}

\begin{algorithm}[ht!]
\caption{\textsc{Aggregate}: Aggregate Monte Carlo Samples}
\label{alg:aggregate_modular}
\begin{algorithmic}[1]
\REQUIRE Observed Data $X^{\text{obs}}$, observation mask $\Omega$,
Monte Carlo sample collection $\mathcal{S}=\{(\bx_i^{(k)},\bc_i^{(k)})\}_{i\in [n],k\in [K]}$,  MM search column set $\mathcal{T}$.
\ENSURE Updated joint state $(\widehat{X}', \widehat{C}')$ 
\STATE Initialize $\widehat X' \gets X^{\text{obs}},\widehat C' \gets \mathbf{0}_{n\times d}$, $\mathcal{M}\gets\{(i,j)\mid i\in[n],j\in[d],\omega_{i,j}=0\}$.

    \FORALL{$(i,j)\in \mathcal{M}$}
    \STATE Compute $\widehat{x}_{i,j}$, $u_{i,j}$, according to 
    \eqref{eq:aggregate_data}, \eqref{eq:uncertainty}.
    \STATE Set $\widehat X_{i,j}' \gets \widehat{x}_{i,j}$.
    \ENDFOR
    \FORALL{$j\in\mathcal{T}$}
    \STATE Let $\mathcal{M}_j \gets \{i:(i,j)\in\mathcal{M}\}$.
    
    \STATE Run \algnamekmean\ on $\{u_{i,j}: i\in\mathcal{M}_j\}$ to obtain two clusters.
    \STATE Identify the high-uncertainty cluster $\mathcal{C}_{1,j}$.
    \FORALL{$i\in\mathcal{M}_j$}
    \STATE Set $\widehat C_{i,j}' \gets \widehat{c}_{i,j}$ according to \eqref{eq:mm_update}.
    \ENDFOR
    \ENDFOR

\RETURN $(\widehat X',\widehat C')$.
\end{algorithmic}
\end{algorithm}

\section{Theoretical Properties}
We provide several basic theoretical properties of the proposed framework.

\subsection{Proof of Proposition~\ref{prop:mar+known_mcar}}\label{app:proof1}

\begin{proof}
Case 1. For each coordinate $j$, the observation process defines a known channel $K_j$ from $\bx^\true_j$ to $\bx_j^{\obs}$: 
\[ 
K_j(\bx_j^{\obs} \mid \bx^\true_j)= \begin{cases} 1, & \bx^\true_j=\na,\ \bx_j^{\obs}=\na,\\ 1-p_j, & \bx^\true_j \in\mathcal X_j,\ \bx_j^{\obs}=\bx^\true_j,\\ p_j, & \bx^\true_j \in\mathcal X_j,\ \bx_j^{\obs}=\na,\\ 0, & \text{otherwise}. \end{cases} 
\] 
Since $p_j<1$, the channel $K_j$ is injective. Indeed, for any
$a\in\mathcal X_j$,
\[
\Pro(\bx_j^{\obs}=a)=(1-p_j)\Pro(\bx_j^\true=a),
\]
and hence
\[
\Pro(\bx_j^\true=a)=\frac{\Pro(\bx_j^{\obs}=a)}{1-p_j}.
\]
Moreover,
\[
\Pro(\bx_j^{\obs}=\na)
=
\Pro(\bx_j^\true=\na)
+
p_j\Pro(\bx_j^\true\in\mathcal X_j) = \Pro(\bx_j^\true=\na)
+
p_j(1-\Pro(\bx_j^\true=\na)),
\]
so $\Pro(\bx_j^\true=\na)$ is also uniquely determined by the distribution
of $\bx_j^{\obs}$ and the known value of $p_j$.

For the full vector, the observation channel is
\[
K=K_1\otimes\cdots\otimes K_d.
\]
Since each $K_j$ is injective, the product channel $K$ is also injective.
Therefore the complete-data distribution $\Pro(\bx^\true)$ is uniquely
determined by the observed-data distribution $\Pro(\bx^{\obs})$.

Now fix a coordinate $j$. Since
\[
c_j=\mathbbm 1\{\bx_j^\true=\na\},
\]
we have, for any $z$ such that $\Pro(\bx_{-j}^\true=z)>0$,
\[
\phi_\theta(z)
=
\Pro(c_j=1\mid \bx_{-j}^{\true}=z)
=
\Pro(\bx_j^\true=\na\mid \bx_{-j}^{\true}=z).
\]
The right-hand side can be written as
\[
\Pro(\bx_j^\true=\na\mid \bx_{-j}^{\true}=z)
=
\frac{
\Pro(\bx_j^\true=\na,\ \bx_{-j}^{\true}=z)
}{
\Pro(\bx_{-j}^{\true}=z)
}.
\]
Both numerator and denominator are determined by the complete-data distribution
$\Pro(\bx^\true)$, which has already been shown to be identifiable from
$\Pro(\bx^{\obs})$. Therefore $\phi_\theta(z)$ is identifiable for every
$z$ in the support of $\bx_{-j}^{\true}$.


Therefore $\phi_\theta$ is identifiable: 
if another parameter $\theta'$ gives the same observed-data distribution,
then it must satisfy $\phi_{\theta'}(z)=\phi_\theta(z)$ for all $z$ with $\Pr(\bx_{-j}^{\true}=z)>0$. By the assumed identifiability of the parametric family $\{\phi_\theta\}$, this implies $\theta'=\theta$. Therefore the meaningful-missingness mechanism is identifiable from the observed data.

Case 2. Recall that $\omega_j$ denotes the final observation indicator. 
First consider any $j\notin S$. By assumption, $c_j=0$ almost surely.
Therefore the only possible source of observed missingness in column $j$ is
observation-induced missingness. Hence $\omega_j=1-r_j$. Since $r_j\sim\mathrm{Bernoulli}(p)$, we have $\Pro(r_j=1)=p$. The left-hand side is determined by the observed-data distribution, so $p$ is
identifiable.

Now consider $j\in S$. By definition, we have $\omega_j=0$ implies $c_j=1\ \text{or}\ r_j=1$. Since $r_j$ is MCAR and independent of $(c_j,\bx_{-j}^{\true})$,
\[
\Pro(\omega_j=0\mid \bx_{-j}^{\true}=z)
=
\Pro(c_j=1\mid \bx_{-j}^{\true}=z)
+
\Pro(c_j=0,r_j=1\mid \bx_{-j}^{\true}=z).
\]
Thus
\[
\Pro(\omega_j=0\mid \bx_{-j}^{\true}=z)
=
\phi_\theta(z)+p\{1-\phi_\theta(z)\}.
\]
Equivalently,
\[
\Pro(\omega_j=0\mid \bx_{-j}^{\true}=z)
=
p+(1-p)\phi_\theta(z).
\]
Because $p<1$ is identifiable from the non-MM columns, we obtain
\[
\phi_\theta(z)
=
\frac{\Pro(\omega_j=0\mid \bx_{-j}^{\true}=z)-p}{1-p}.
\]
The right-hand side is identifiable from the observed-data distribution.
Therefore $\phi_\theta$ is identifiable on the support of $\bx_{-j}^{\true}$.

Finally, suppose another parameter $\theta'$ gives the same observed-data
distribution. Then it gives the same function $\phi_{\theta'}(z)$ on the support
of $\bx_{-j}^{\true}$. Hence $\phi_{\theta'}(z)=\phi_\theta(z)$ for all $z$ in the support. By the identifiability assumption on the parametric
family, this implies $\theta'=\theta$. 
Therefore $\theta$ is identifiable.
\end{proof}

\subsection{More Discussions}\label{app:theory-discuss}

\paragraph{Discussion of Remark \ref{remark:linear_regression}}

According to proof under Case 2, 
we obtain
\[
\phi_\theta(z)
=\sigma(\mathbf{\alpha}_j^\top z+ \beta_j)=
\frac{\Pro(\omega_j=0\mid \bx_{-j}^{\true}=z)-p}{1-p}.
\]

In typical applications, the sample size is much larger than the
number of covariates. Moreover, under a mild assumption that for any $(\mathbf{\alpha},\beta)$,
\[
\mathbf{\alpha}^\top Z + \beta = 0 \quad \text{a.s.}
\]
implies $\mathbf{\alpha} = \mathbf{0}$ and $\beta = 0$
\footnote{This assumption can be induced by the general assumption $\phi_{\theta}\neq\phi_{\theta'}$ for any $\theta\neq \theta'$.}, the linear predictor
$\mathbf{\alpha}_j^\top z + \beta_j$ is uniquely determined, and hence
$(\mathbf{\alpha}_j, \beta_j)$ is unique.

\paragraph{Discussion of Remark~\ref{remark:general}} 

We provide a more in-depth discussion of the uncertainty separation exhibited in the first iteration. 
For a distribution \(P\) on \(\mathcal X_j\), define the population uncertainty
functional
\[
\mathcal U_j(P)
=
\begin{cases}
-\displaystyle\sum_{v\in\mathcal X_j} P(v)\log P(v),
& \text{if feature } j \text{ is discrete},\\[8pt]
\sqrt{\operatorname{Var}_{Y\sim P}(Y)},
& \text{if feature } j \text{ is continuous}.
\end{cases}
\]

\begin{proposition}[First-iteration weak separation]
\label{prop:first_iteration_weak_separation}
Fix a candidate feature \(j\). For each random sample $\bx^{\obs}:=\{x^\obs_1,\ldots,x^\obs_d\}$, let
$\mathcal F^{\obs}_{j}
=
\left(\bx^{\obs}_{-j},\boldsymbol{\omega}_{-j}\right)$
be the observed context excluding the target coordinate \(j\). Suppose that the
observation-induced missingness is MCAR, $r_{j}\sim \operatorname{Bernoulli}(\rho_j)$, and that the meaningful-missingness mechanism is MAR with respect to the
observed context:
\[
\Pro(c_{j}=1\mid \mathcal F^{\obs}_{j})
=
\pi_j(\mathcal F^{\obs}_{j}).
\]
Let $\bar\pi_j=\mathbb E\!\left[\pi_j(\mathcal F^{\obs}_{j})\right]$.
Define the population conditional distribution of the initialized first-round value by
\[
P^{(0)}_j(A\mid F)
:=
\Pro
\left(
\widehat X^{(0)}_{j}\in A
\mid
\mathcal F^{\obs}_{j}=F
\right),
\qquad A\subseteq \mathcal X_j .
\]
The corresponding first-iteration population uncertainty score for the $j$-th coordinate is
$U^{(1)}_{j}
:=
\mathcal U_j
\left(
P^{(0)}_j(\cdot\mid \mathcal F^{\obs}_{i,j})
\right)$.
Assume that \(\rho_j>0\), \(U^{(1)}_{j}\) has finite first moment and
$\operatorname{Cov}
\left(
\pi_j(\mathcal F^{\obs}_{j}),
U^{(1)}_{j}
\right)
>0$.
Then the first-iteration population uncertainty score is larger on average for
meaningfully missing entries than for observation-induced missing entries:
\[
\mathbb E
\left[
U^{(1)}_{j}
\mid
c_{j}=1
\right]
>
\mathbb E
\left[
U^{(1)}_{j}
\mid
c_{j}=0,r_{j}=1
\right].
\]
\end{proposition}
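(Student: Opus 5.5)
The plan is to write both conditional expectations as reweightings of the marginal law of $U^{(1)}_j$, with weights given by the conditional probabilities of the two events given the observed context. The key observation is that $U^{(1)}_j$ is a measurable function of $\mathcal F^{\obs}_j$ alone, because it is the uncertainty functional applied to $P^{(0)}_j(\cdot\mid\mathcal F^{\obs}_j)$. Hence
\[
\mathbb E[U^{(1)}_j\mid E]=\frac{\mathbb E[U^{(1)}_j\,\Pro(E\mid\mathcal F^{\obs}_j)]}{\Pro(E)}
\]
for any event $E$ of positive probability. I will apply this with $E=\{c_j=1\}$ and $E=\{c_j=0,r_j=1\}$.

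For the first event, the MAR assumption gives $\Pro(c_j=1\mid\mathcal F^{\obs}_j)=\pi_j(\mathcal F^{\obs}_j)$ and $\Pro(c_j=1)=\bar\pi_j$. For the second event I need that $r_j$ is independent of $(c_j,\mathcal F^{\obs}_j)$, which is how I read the MCAR assumption on the observation channel. Then
\[
\Pro(c_j=0,r_j=1\mid\mathcal F^{\obs}_j)=\rho_j\bigl(1-\pi_j(\mathcal F^{\obs}_j)\bigr),\qquad \Pro(c_j=0,r_j=1)=\rho_j(1-\bar\pi_j).
\]
The factor $\rho_j$ cancels. Writing $\pi=\pi_j(\mathcal F^{\obs}_j)$ and $U=U^{(1)}_j$, the two quantities become
\[
A=\frac{\mathbb E[\pi U]}{\bar\pi_j},\qquad B=\frac{\mathbb E[(1-\pi)U]}{1-\bar\pi_j}.
\]
Both are well defined: $U$ has a finite first moment and $0\le\pi\le1$. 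The denominators are positive because $\operatorname{Cov}(\pi,U)>0$ forces $\pi$ to be non-degenerate, so $0<\bar\pi_j<1$; and $\rho_j>0$ makes the second event have positive probability.

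The final step is algebra. Substituting $\mathbb E[\pi U]=\operatorname{Cov}(\pi,U)+\bar\pi_j\,\mathbb E U$ gives
\[
A=\mathbb E U+\frac{\operatorname{Cov}(\pi,U)}{\bar\pi_j},\qquad B=\mathbb E U-\frac{\operatorname{Cov}(\pi,U)}{1-\bar\pi_j}.
\]
Therefore
\[
A-B=\frac{\operatorname{Cov}(\pi,U)}{\bar\pi_j(1-\bar\pi_j)}>0,
\]
which is the claimed inequality.

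The main obstacle is justifying the conditioning step, namely that $U^{(1)}_j$ is $\sigma(\mathcal F^{\obs}_j)$-measurable and that $r_j$ is independent of $(c_j,\mathcal F^{\obs}_j)$. If the context $\mathcal F^{\obs}_j$ includes other coordinates' observation masks, I will take this joint independence as the intended meaning of MCAR. I will also note that the statement's $\mathcal F^{\obs}_{i,j}$ is the same object as $\mathcal F^{\obs}_j$, so the argument is unaffected.
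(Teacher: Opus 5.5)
Your proposal is correct and follows the paper's proof. Both use the $\mathcal F^{\obs}_j$-measurability of $U^{(1)}_j$ to write the two conditional means as $\mathbb E[\pi U]/\bar\pi_j$ and $\mathbb E[(1-\pi)U]/(1-\bar\pi_j)$, and then reduce the difference to $\operatorname{Cov}(\pi,U)/\{\bar\pi_j(1-\bar\pi_j)\}$; the paper drops $\{r_j=1\}$ by independence where you cancel $\rho_j$, and your check that positive covariance forces $0<\bar\pi_j<1$ is a detail the paper leaves implicit.
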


\begin{proof}
By definition,
$U^{(1)}_{j}
=
\mathcal U_j
\left(
P^{(0)}_j(\cdot\mid \mathcal F^{\obs}_{j})
\right)$,
so \(U^{(1)}_{j}\) is measurable with respect to
\(\mathcal F^{\obs}_{j}\). Since
$\Pro(c_{j}=1\mid \mathcal F^{\obs}_{j})
=
\pi_j(\mathcal F^{\obs}_{j})$,
we have
\[
\begin{aligned}
\mathbb E
\left[
U^{(1)}_{j}
\mid
c_{j}=1
\right]& =
\frac{
\mathbb E
\left[
U^{(1)}_{j}\mathbbm 1\{c_{j}=1\}
\right]
}{
\Pro(c_{j}=1)
}
=
\frac{
\mathbb E
\left[
U^{(1)}_{j}
\Pro(c_{j}=1\mid \mathcal F^{\obs}_{j})
\right]
}{
\bar\pi_j
}
=
\frac{
\mathbb E
\left[
\pi_j(\mathcal F^{\obs}_{j})U^{(1)}_{j}
\right]
}{
\bar\pi_j
}.
\end{aligned}
\]
Similarly, since \(r_{j}\) is MCAR and independent of
\((c_{j},\mathcal F^{\obs}_{j})\),
\[
\begin{aligned}
\mathbb E
\left[
U^{(1)}_{j}
\mid
c_{j}=0,r_{j}=1
\right]
=
\mathbb E
\left[
U^{(1)}_{j}
\mid
c_{j}=0
\right]
&=
\frac{
\mathbb E
\left[
U^{(1)}_{j}\mathbbm 1\{c_{j}=0\}
\right]
}{
\Pro(c_{j}=0)
}\\&=
\frac{
\mathbb E
\left[
U^{(1)}_{j}
\left(
1-\pi_j(\mathcal F^{\obs}_{j})
\right)
\right]
}{
1-\bar\pi_j
}.
\end{aligned}
\]
Therefore,
\[
\begin{aligned}
&\mathbb E
\left[
U^{(1)}_{j}
\mid
c_{j}=1
\right]
\!-\!
\mathbb E
\left[
U^{(1)}_{j}
\mid
c_{j}=0,r_{j}=1
\right] 
\!\!=\!\!
\frac{
\mathbb E
\left[
\pi_j(\mathcal F^{\obs}_{j})U^{(1)}_{j}
\right]
}{
\bar\pi_j
}
\!-\!
\frac{
\mathbb E
\left[
\left(
1-\pi_j(\mathcal F^{\obs}_{j})
\right)
U^{(1)}_{j}
\right]
}{
1-\bar\pi_j
}
\\
&=
\frac{
\mathbb E
\left[
\pi_j(\mathcal F^{\obs}_{j})U^{(1)}_{j}
\right]
-
\bar\pi_j
\mathbb E
\left[
U^{(1)}_{j}
\right]
}{
\bar\pi_j(1-\bar\pi_j)
}=
\frac{
\operatorname{Cov}
\left(
\pi_j(\mathcal F^{\obs}_{j}),
U^{(1)}_{j}
\right)
}{
\bar\pi_j(1-\bar\pi_j)
}.
\end{aligned}
\]
\end{proof}

By the results in Proposition~\ref{prop:first_iteration_weak_separation}, there exists a threshold \(\tau_j\) such that
\[
\Pro
\left(
U^{(1)}_{j}>\tau_j
\mid
c_{j}=1
\right)
>
\Pro
\left(
U^{(1)}_{j}>\tau_j
\mid
c_{j}=0,r_{j}=1
\right).
\]
For such a threshold, the high-uncertainty group can thus be distinguished from the observation-induced missing group. 

\begin{remark}[Interpretation of the covariance condition]
\label{rem:cov_condition_random_init}
The covariance condition in
Proposition~\ref{prop:first_iteration_weak_separation} is natural under the
random uniform initialization used by \algname{}. Let \(G_j\) denote the empirical initialization distribution
\(\operatorname{Unif}(\widehat{\mathcal X}_j)\) in
Appendix~\ref{app:imple-detail}. For any measurable set
\(A\subseteq\mathcal X_j\), define
\[
Q_j(A\mid F)
:=
\Pro(X_j\in A\mid c_j=0,\mathcal F_j^{\obs}=F),
\]
the context-specific conditional distribution. Since every
observed missing entry is initially filled from \(G_j\), while an observed
regular entry keeps its true value, the first-round initialized distribution
satisfies
\[
P_j^{(0)}(\cdot\mid F)
=
(1-\alpha_j(F))Q_j(\cdot\mid F)
+
\alpha_j(F)G_j,
\qquad
\alpha_j(F)=\rho_j+(1-\rho_j)\pi_j(F).
\]
Thus, the mixture weight on the random-initialization component increases with
the MM propensity \(\pi_j(F)\). The initialization distribution \(G_j\) ignores
the sample-specific context \(F\) and spreads mass over the column-level
regular-value support, whereas \(Q_j(\cdot\mid F)\) is constrained by the
observed context and is typically more concentrated when the regular value is
recoverable. Therefore, higher-MM-propensity contexts tend to induce larger
first-iteration uncertainty,
$U_j^{(1)}=\mathcal U_j(P_j^{(0)}(\cdot\mid \mathcal F_j^{\obs}))$,
giving a positive association between
\(\pi_j(\mathcal F_j^{\obs})\) and \(U_j^{(1)}\).
\end{remark}

\section{Dataset and Missingness Construction}
\label{app:data_mask_construction}

This appendix describes the synthetic and real-world datasets used in the experiments, as summarized in Table~\ref{tab:datasets_summary}.  
For each dataset, we first specify the underlying data or feature construction, then define the meaningful-missingness mechanism, and finally introduce an additional missing mask to represent the observation-induced missingness. 
The observation-induced missing mechanisms are shared across two datasets and are therefore presented first. Throughout the experiments, the set of MM candidate columns is assumed to be known, and MM identification is performed only on missing entries in these candidate columns.

\begin{table}[ht!]
\centering
\caption{Summary of the datasets used in the experiments.}
\footnotesize
\label{tab:datasets_summary}
\begin{tabular}{c|c|c|c|c|c|c}
\toprule
Dataset & \#Train & \#Test & \#Continuous & \#Discrete & \#Continuous MM  & \#Discrete MM  \\
\midrule
Bayesian Network &  14000 & 6000 & 2 & 3 & 1 & 2 \\
MIMIC-IV-ED &  353150 & 88287 & 14 & 16 & 0 & 4 \\
\bottomrule
\end{tabular}
\end{table}

\subsection{Bayesian Network Synthetic Data Construction}
\label{app:bn_details}

We construct the synthetic tabular dataset from a Bayesian network so that both the underlying data-generating process and the meaningful-missingness mechanism are fully controlled. This synthetic data contains five variables: two continuous variables $\texttt{C1}$ and $\texttt{C2}$, and three discrete variables $\texttt{D1}$, $\texttt{D2}$, and $\texttt{D3}$.
The corresponding graph structure is shown in Figure~\ref{fig:BN}.

\begin{figure}[h!]
\centering
  \includegraphics[scale=0.9]{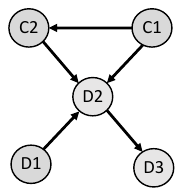}
  \caption{Bayesian network used to generate the synthetic tabular data. $\texttt{C1}$ and $\texttt{C2}$ are continuous variables, while $\texttt{D1}$, $\texttt{D2}$, and $\texttt{D3}$ are discrete variables.}
  \label{fig:BN}
\end{figure}

According to the underlying dependence in Figure~\ref{fig:BN}, we first sample $\texttt{C1} \sim \mathcal{N}(25,2)$. 
Next, conditional on $\texttt{C1}$, we generate
\[
\texttt{C2}\mid \texttt{C1} \sim \mathcal{N}(0.1\cdot \texttt{C1}+50,\,5).
\]

We then generate $\texttt{D1}\sim \mathrm{Bernoulli}(0.3)$, where $\mathrm{Bernoulli}(\xi)$ denotes the Bernoulli distribution with mean $\xi$. 
The variable $\texttt{D2}$ is generated conditionally on $\texttt{C1}$, $\texttt{C2}$, and $\texttt{D1}$ as
\[
\texttt{D2}\mid \texttt{C1},\texttt{C2},\texttt{D1}\sim
\begin{cases}
Ca(0.3,0.6,0.1), & \texttt{C1}>26,\ \texttt{C2}>55,\ \texttt{D1}=1,\\
Ca(0.2,0.3,0.5), & \texttt{C1}>26,\ \texttt{C2}\le 55,\ \texttt{D1}=1,\\
Ca(0.7,0.1,0.2), & \texttt{C1}\le 26,\ \texttt{C2}>55,\ \texttt{D1}=1,\\
\na, & \texttt{C1}\le 26,\ \texttt{C2}\le 55,\ \texttt{D1}=1,\\
Ca(0.05,0.05,0.9), & \texttt{D1}=0,
\end{cases}
\]
where $Ca(p_1,p_2,1-p_1-p_2)$ denotes a categorical distribution over three regular categories.

Finally, $\texttt{D3}$ is generated conditionally on $\texttt{D2}$ as
\[
\texttt{D3}\mid \texttt{D2}\sim
\begin{cases} 
\mathrm{Bernoulli(0.2)} & \text{D2}=0; \\
\na & \text{D2}=1; \\
\mathrm{Bernoulli(0.8)} & \text{D2}=2;\\
\na & \text{D2}=\na.
\end{cases}
\]

After generating the five variables, we further introduce meaningful missingness on $\texttt{C1}$ by recoding large values of $\texttt{C1}$ into a semantic missing state:
\[
\texttt{C1}=
\begin{cases}
\na, & \texttt{C1}>27,\\
\texttt{C1}, & \texttt{C1}\le 27.
\end{cases}
\]
Overall, the data-generating distribution for this synthetic dataset has meaningfully missing variables in both the continuous variable $\texttt{C1}$ and two discrete variables $\texttt{D2}$, $\texttt{D3}$.

\subsection{MIMIC-IV-ED Dataset}
\label{app:real_data_details}

We extract the real-world dataset from MIMIC-IV-ED \cite{johnson2021mimic} by selecting a structured feature subset and then injecting synthetic meaningful missingness on pre-specified target columns.

\paragraph{Cohort and Feature Construction.}
\label{app:mimic_data}

Our real-data dataset is constructed from MIMIC-IV-ED using a selected subset of structured ED variables \cite{johnson2021mimic}. 
We retain 30 non-object columns, including 14 numerical variables and 16 discrete variables, covering triage measurements, last-recorded vital signs, prior utilization counts, clinical scores, outcomes, chief complaint indicators, and comorbidity indicators; the full list is given in Table~\ref{tab:real_dataset_selected}.

The feature subset is chosen so that the recorded values can serve as reliable ground truth for controlled evaluation and so that the selected covariates support the construction of clinically meaningful MM mechanisms on a subset of target columns. 
Based on this subset, we later introduce synthetic meaningful missingness and additional observation-induced missingness to form the final dataset.

\begin{table}[h!]
\centering
\caption{Selected 30-column feature subset used to construct the MIMIC-IV-ED real-data dataset.}
\label{tab:real_dataset_selected}
\small
\begin{tabular}{
>{\raggedright\arraybackslash}p{0.24\linewidth}
>{\raggedright\arraybackslash}p{0.54\linewidth}
>{\raggedright\arraybackslash}p{0.12\linewidth}
}
\toprule
\textbf{Group} & \textbf{Columns} & \textbf{Type} \\
\midrule

Triage vitals
&
\texttt{triage\_temperature}, \texttt{triage\_heartrate}, \texttt{triage\_resprate}, \texttt{triage\_o2sat}, \texttt{triage\_sbp}, \texttt{triage\_dbp}, \texttt{triage\_pain}, \texttt{triage\_acuity}
& Continuous \\

Last recorded vitals
&
\texttt{ed\_temperature\_last}, \texttt{ed\_heartrate\_last}, \texttt{ed\_resprate\_last}, \texttt{ed\_o2sat\_last}, \texttt{ed\_sbp\_last}, \texttt{ed\_dbp\_last}
& Continuous \\

Demographics / utilization
&
\texttt{age}, \texttt{n\_ed\_365d}, \texttt{n\_hosp\_365d}, \texttt{n\_icu\_365d}
& Discrete \\

Clinical scores
&
\texttt{score\_CCI}, \texttt{score\_NEWS}
& Discrete \\

Outcomes
&
\texttt{outcome\_hospitalization}, \texttt{outcome\_critical}, \texttt{outcome\_icu\_transfer\_12h}
& Discrete \\

Chief complaint indicators
&
\texttt{chiefcom\_chest\_pain}, \texttt{chiefcom\_shortness\_of\_breath}, \texttt{chiefcom\_abdominal\_pain}, \texttt{chiefcom\_fever\_chills}
& Discrete \\

Comorbidity indicators
&
\texttt{cci\_CHF}, \texttt{cci\_Renal}, \texttt{eci\_Pulmonary}
& Discrete \\
\bottomrule
\end{tabular}
\end{table}

\paragraph{Meaningful-Missingness Mechanisms on MIMIC-IV-ED.}
\label{app:mimic_mm_mechanism}

After constructing the MIMIC-IV-ED cohort, we introduce synthetic meaningful missingness on a pre-specified subset of discrete variables. 
In the current dataset, the MM target columns are
\[
\texttt{outcome\_critical},\,
\texttt{outcome\_icu\_transfer\_12h},\,
\texttt{outcome\_hospitalization}, \,
\texttt{cci\_CHF}.
\]
Table~\ref{tab:mm_ratio_mimic} reports the
resulting MM rates, computed before adding the additional
observation-induced missingness mask.
\begin{table}[h]
\centering
\caption{Meaningful-missingness rates induced by the synthetic MM mechanisms on selected MIMIC-IV-ED target columns.}
\label{tab:mm_ratio_mimic}
\begin{tabular}{llc}
\toprule
Feature &  MM Rate (\%) \\
\midrule
\texttt{outcome\_critical}  & 9.06 \\
\texttt{outcome\_icu\_transfer\_12h} & 8.70 \\
\texttt{outcome\_hospitalization}  & 13.82 \\
\texttt{cci\_CHF} & 15.17 \\
\bottomrule
\end{tabular}
\end{table}
\paragraph{MM on \texttt{outcome\_critical}.}
We set \(c_{i,j}=1\) for \(j=\texttt{outcome\_critical}\) if
$\texttt{triage\_acuity}_i \le 2$
and at least one of the following conditions holds:
\[
\texttt{triage\_o2sat}_i < 95,\qquad
\texttt{triage\_resprate}_i \ge 22,\qquad
\texttt{triage\_heartrate}_i \ge 110,
\]
and set \(c_{i,j}=0\) otherwise. 
\paragraph{MM on \texttt{outcome\_icu\_transfer\_12h}.}
We set \(c_{i,j}=1\) for \(j=\texttt{outcome\_icu\_transfer\_12h}\) if
$\texttt{triage\_acuity}_i \le 2$
and at least one of the following conditions holds:
\[
\texttt{triage\_o2sat}_i < 95,\qquad
\texttt{triage\_resprate}_i \ge 22,\qquad
\texttt{n\_icu\_365d}_i \ge 1,
\]
and set \(c_{i,j}=0\) otherwise. 
\paragraph{MM on \texttt{outcome\_hospitalization}.}
For hospitalization, we define the score
\begin{align*}
s_i^{\mathrm{hosp}}
=&
\mathbf{1}\{\texttt{triage\_acuity}_i \le 2\}
+
\mathbf{1}\{\texttt{score\_NEWS}_i \ge 2\}\\
&+
\mathbf{1}\{\texttt{age}_i \ge 70\}
+
\mathbf{1}\{\texttt{n\_hosp\_365d}_i \ge 1\},
\end{align*}
and map it to a missingness probability
$p_i^{\mathrm{hosp}}
=
\mathrm{clip}\!\left(0.02 + 0.10\, s_i^{\mathrm{hosp}},\, 0,\, 0.45\right)$.
We then sample
\[
c_{i,j}\sim \mathrm{Bernoulli}\!\left(p_i^{\mathrm{hosp}}\right),
\qquad
j=\texttt{outcome\_hospitalization}.
\]

\paragraph{MM on \texttt{cci\_CHF}.}
For the CHF comorbidity indicator, we define
\begin{align*}
s_i^{\mathrm{CHF}}
=&
\mathbf{1}\{\texttt{age}_i \ge 75\}
+  \mathbf{1}\{\texttt{n\_hosp\_365d}_i \ge 2\}\\
&+\mathbf{1}\{\texttt{triage\_sbp}_i < 110\}
+\mathbf{1}\{\texttt{score\_CCI}_i \ge 4\},
\end{align*}
and map it to a probability
$p_i^{\mathrm{CHF}}
=
\mathrm{clip}\!\left(0.03 + 0.16\, s_i^{\mathrm{CHF}},\, 0,\, 0.70\right)$.
We then sample
\[
c_{i,j}\sim \mathrm{Bernoulli}\!\left(p_i^{\mathrm{CHF}}\right),
\qquad
j=\texttt{cci\_CHF}.
\]

\subsection{Observation-induced Missingness Mechanisms}
\label{app:ordinary_mechanism}

For both the synthetic and real-data datasets, we introduce an additional observation-induced ordinary missingness mask $R=(r_{i,j})$ on top of the underlying samples. Depending on the experiment, this mask is generated under MCAR, MAR, or MNAR mechanisms as defined below.

\paragraph{Missing Completely at random (MCAR).}
\label{app:mcar_mechanism}
Under the MCAR mechanism, each entry is masked independently with a fixed probability $p$, regardless of the data values. Formally, for each sample $i$ and feature $j$, the mask variable $r_{i,j}$ is generated as
\[
\Pro(r_{i,j}=1 \mid X) = p.
\]
Therefore, the resulting mask is completely random and has expected missing proportion $p$.

\paragraph{Missing at Random (MAR).}
\label{app:mar_mechanism}
Under the MAR mechanism, the missingness of some variables depends on a subset of variables that remain fully observed. Specifically, the features are first split into two disjoint sets:
\[
\mathcal{J}_{\mathrm{obs}} \cup \mathcal{J}_{\mathrm{miss}} = \{1,\dots,d\},
\qquad
\mathcal{J}_{\mathrm{obs}} \cap \mathcal{J}_{\mathrm{miss}} = \emptyset,
\]
where variables in $\mathcal{J}_{\mathrm{obs}}$ are always observed, and variables in $\mathcal{J}_{\mathrm{miss}}$ may be masked. For each $j \in \mathcal{J}_{\mathrm{miss}}$, the masking probability is defined through a logistic model:
\[
\Pro(r_{i,j}=1 \mid X)
=
\Pro(r_{i,j}=1 \mid X_{i,\mathcal{J}_{\mathrm{obs}}})
=
\sigma\!\bigl(X_{i,\mathcal{J}_{\mathrm{obs}}}^{\top} w_j + b_j\bigr),
\]
where $\sigma(x)=1/(1+e^{-x})$ is the sigmoid function, $w_j$ is a randomly generated coefficient vector, and $b_j$ is chosen so that the average masking rate is approximately $p$. Then the mask is sampled as
\[
r_{i,j} \sim \mathrm{Bernoulli}\!\left(\sigma\!\bigl(X_{i,\mathcal{J}_{\mathrm{obs}}}^{\top} w_j + b_j\bigr)\right),
\qquad j \in \mathcal{J}_{\mathrm{miss}}.
\]
Hence, the probability of missingness depends only on other observed variables, not on the masked entry itself, which is exactly the defining property of MAR.

\paragraph{Missing Not at Random (MNAR).}
\label{app:mnar_mechanism}
Under the MNAR logistic mechanism, the features are first partitioned into two sets:
\[
\mathcal{J}_{\mathrm{param}} \cup \mathcal{J}_{\mathrm{miss}} = \{1,\dots,d\},
\qquad
\mathcal{J}_{\mathrm{param}} \cap \mathcal{J}_{\mathrm{miss}} = \emptyset.
\]
The variables in $\mathcal{J}_{\mathrm{param}}$ are used as inputs to a logistic masking model, while variables in $\mathcal{J}_{\mathrm{miss}}$ are masked according to
\[
\Pro(r_{i,j}=1 \mid X)
=
\sigma\!\bigl(X_{i,\mathcal{J}_{\mathrm{param}}}^{\top} w_j + b_j\bigr),
\qquad j \in \mathcal{J}_{\mathrm{miss}}.
\]
Thus,
\[
r_{i,j} \sim \mathrm{Bernoulli}\!\left(\sigma\!\bigl(X_{i,\mathcal{J}_{\mathrm{param}}}^{\top} w_j + b_j\bigr)\right),
\qquad j \in \mathcal{J}_{\mathrm{miss}}.
\]
After this, the input variables themselves are additionally masked at random:
\[
r_{i,j} \sim \mathrm{Bernoulli}(p),
\qquad j \in \mathcal{J}_{\mathrm{param}}.
\]
Therefore, the missingness of variables in $\mathcal{J}_{\mathrm{miss}}$ depends on values from $\mathcal{J}_{\mathrm{param}}$, but those driving variables may themselves become missing. As a result, the masking probability depends on information that is not fully observed in the final dataset, so the mechanism is Missing Not At Random.

\section{Implementation Details and Additional Numerical Results}\label{app:more-results}


\subsection{Implementation Details}\label{app:imple-detail}

We implement \algname\ using the EDM diffusion framework~\citep{karras2022elucidating}.
At each outer iteration, the model is trained on the current joint state
\[
    \bs = [\bx_{\mathrm{model}}, \bc]\in\mathbb{R}^{d_x+d_c},
\]
where \(\bx_{\mathrm{model}}\) is the normalized encoded tabular vector and
\(\bc\) is the current predicted meaningful-missingness mask in the original
raw-column space. The mask coordinates are represented as scalar binary
variables and concatenated directly with the encoded data representation.

We use the same TabDDPM-style MLP denoising backbone as
DiffPuter~\citep{pmlr-v202-kotelnikov23a,zhang2025diffputer}, applied to the
joint diffusion state rather than to the encoded data vector alone. The EDM
network consists of an input projection, a sinusoidal noise embedding passed
through a two-layer time MLP and added to the projected state, three fully connected layers with SiLU activations, and a linear output layer matching the joint-state dimension. Thus, the denoiser is trained to reconstruct both the tabular coordinates and the MM-mask coordinates under the standard EDM preconditioning. Noise levels during training follow the EDM log-normal noise distribution, and sampling uses the standard EDM \(\rho\)-schedule with Heun's second-order sampler. We use the same architecture across all datasets and missingness settings.


\paragraph{Random Initialization.}
At initialization, all predicted meaningful-missingness indicators are set to
zero, i.e., \(\widehat C^{(0)}=0\), so every observed missing entry is initially
treated as observation-induced missingness. For each feature \(j\), we form the
empirical regular-value support from the observed entries,
\[
\widehat{\mathcal X}_j
=
\{x^{\obs}_{\ell,j}: \omega_{\ell,j}=1,\ \ell\in[n]\}.
\]
Then, for every missing entry \((i,j)\), we initialize
\[
    \widehat x^{(0)}_{i,j}
    =
    G_{i,j},
    \qquad
    G_{i,j}\sim \operatorname{Unif}(\widehat{\mathcal X}_j),
\]
independently across missing entries.

\paragraph{Discrete-variable Handling.}
We distinguish integer-valued and categorical discrete variables. Integer-valued
columns are encoded as scalar coordinates in the diffusion model. During decoding,
these coordinates are rounded to the nearest valid integer value. Categorical
columns are expanded into one-hot blocks before training. After sampling, each
categorical block is decoded by taking the \(\arg\max\) over the block.

\paragraph{Diff-Joint Hyperparameters.}
We use 10 iterations for the Bayesian Network and 3 iterations for MIMIC-IV-ED. Except for the number of refinement iterations, we use a fixed
hyperparameter configuration across datasets, missingness ratios, missingness
mechanisms, and random seeds. The remaining
default settings are summarized in Table~\ref{tab:diffjoint_hyperparams}.

\paragraph{Hardware and Runtime.}
All experiments were conducted on a single NVIDIA RTX 5090 GPU. 
Each combination of random seed, missingness mechanism, and missingness ratio
takes approximately 15 minutes on the Bayesian-network dataset and 2 hours on
MIMIC-IV-ED.

\paragraph{Downstream Task Setup.}
All downstream tasks are formulated as multi-class classification problems on
the augmented feature domain, where \texttt{na} may be treated as a valid
semantic class when it appears in the target variable. We evaluate downstream
utility using a train-on-generated, test-on-real classification protocol. 
For each method, dataset, missingness ratio, and random seed, let
$\widehat X_{\mathrm{tr}}$ denote the method-specific generated or completed
training table obtained from the observed training data. Let the $j$-th variable
be the target variable of the downstream classification task. The classifier is
trained with $\widehat X_{\mathrm{tr},-j}$ as features and
$\widehat X_{\mathrm{tr},j}$ as labels.

Hyperparameters are selected on the real augmented-domain training split, and
final performance is evaluated on the real augmented-domain test split. The same
classifier family, feature-processing pipeline, and hyperparameter grid are used
for all methods under the same downstream task.

When the target variable contains a meaningful-missing state, \texttt{na} is
treated as a valid semantic class rather than as an unobserved label.
Categorical \texttt{na} values in input features are encoded as ordinary
categorical states, while continuous MM candidate columns are removed from the
downstream feature set. On the Bayesian Network dataset, we evaluate two
multi-class tasks with target variables $\texttt{D2}$ and $\texttt{D3}$. On MIMIC-IV-ED, we evaluate four
discrete clinical prediction tasks:
\texttt{outcome\_hospitalization},
\texttt{outcome\_critical},
\texttt{outcome\_icu\_transfer\_12h}, and
\texttt{cci\_CHF}. We use an XGBoost classifier for all tasks and select the
model by validation Macro-F1 on the real training split.

We report Macro-F1, Weighted-F1, ROC-AUC, and accuracy. Macro-F1 is the
unweighted average of class-wise F1 scores. Weighted-F1 denotes the class-balanced weighted F1 score used by our evaluator, where the class-wise F1 score for class $\ell$ is weighted by $\frac{1-p_\ell}{L-1}$, with $L$ denoting the number of classes and $p_\ell$ denoting the class proportion. This weighting assigns larger relative weights to lower-support classes.
ROC-AUC measures the ranking quality of
predicted class probabilities and uses one-vs-rest aggregation for multi-class
tasks. Accuracy is the fraction of correctly classified test examples.

\begin{table}[t!]
\centering
\caption{Default hyperparameters for Diff-Joint.}
\label{tab:diffjoint_hyperparams}
\begin{tabular}{ll}
\toprule
Hyperparameter & Value \\
\midrule
Number of samples $K$ & \(30\) \\
Sampling steps & \(50\) \\
MLP hidden/time-embedding dimension & \(1024\) \\
Batch size & \(4096\) \\
Optimizer & Adam \\
Learning rate & \(10^{-4}\) \\
Maximum epochs per outer iteration & \(1001\) \\
Early stopping patience & \(200\) \\
\(P_{\mathrm{mean}}\), \(P_{\mathrm{std}}\) & \(-1.2,\ 1.2\) \\
\(\sigma_{\mathrm{data}}\) & \(0.5\) \\
\(\sigma_{\min}\), \(\sigma_{\max}\), \(\rho\) & \(0.002,\ 80,\ 7\) \\
Inner inpainting repeats per noise level & \(10\) \\
\bottomrule
\end{tabular}
\end{table}

\subsection{Additional Results}\label{app:add-results}

We also provide the precision and recall of the proposed \algname{} algorithm under three different missing mechanisms in Table~\ref{tab:missing_mechanism_summary_out}, and the precision and recall of \algname{} on the MIMIC-IV-ED dataset under MCAR in Table~\ref{tab:mimic4ed_missing_summary_pr}.

\begin{table}[h!]
\centering
\caption{Summary of final-iteration out-of-sample performance under different missing ratios and missing mechanisms.}
\label{tab:missing_mechanism_summary_out}
\scriptsize
\setlength{\tabcolsep}{4.2pt}
\renewcommand{\arraystretch}{0.95}
\begin{adjustbox}{max width=\textwidth}
\begin{tabular}{l c ccc ccc ccc}
\toprule
\multirow{2}{*}{Method}
& \multirow{2}{*}{Ratio}
& \multicolumn{3}{c}{MCAR}
& \multicolumn{3}{c}{MAR}
& \multicolumn{3}{c}{MNAR} \\
\cmidrule(lr){3-5}\cmidrule(lr){6-8}\cmidrule(lr){9-11}
& 
& R$_{\mathrm{out}} \uparrow$ & P$_{\mathrm{out}} \uparrow$ & ACC$_{\mathrm{out}} \uparrow$
& R$_{\mathrm{out}} \uparrow$ & P$_{\mathrm{out}} \uparrow$ & ACC$_{\mathrm{out}} \uparrow$
& R$_{\mathrm{out}} \uparrow$ & P$_{\mathrm{out}} \uparrow$ & ACC$_{\mathrm{out}} \uparrow$ \\
\midrule
\algname{} & 10
& 90.86 & 73.62 & 78.34
& 87.25 & 80.39 & 75.32
& 90.24 & 71.67 & 72.51 \\
\algname{} & 20
& 82.39 & 58.67 & 69.35
& 88.09 & 65.42 & 66.16
& 86.03 & 58.76 & 65.85 \\
\algname{} & 30
& 78.69 & 49.40 & 63.65
& 65.13 & 43.98 & 57.06
& 76.66 & 55.00 & 67.91 \\
\algname{} & 40
& 76.79 & 45.88 & 65.92
& 74.60 & 57.59 & 67.99
& 77.14 & 41.34 & 60.38 \\
\bottomrule
\end{tabular}
\end{adjustbox}
\end{table}

\begin{table}[ht!]
\centering
\caption{Precision and recall of \algname{} on MIMIC-IV-ED under ordinary MCAR.}
\label{tab:mimic4ed_missing_summary_pr}
\setlength{\tabcolsep}{6pt}
\scriptsize
\begin{tabular}{lccc}
\toprule
Method & Ratio & R$_{\text{out}} \uparrow$ & P$_{\text{out}} \uparrow$ \\
\midrule
\algname{} & 10 & ${61.06\% \pm 4.82\%}$ & ${71.99\% \pm 0.67\%}$ \\
\algname{} & 20 & ${55.34\% \pm 6.69\%}$ & ${56.49\% \pm 0.81\%}$ \\
\algname{} & 30 & ${59.90\% \pm 6.62\%}$ & ${44.82\% \pm 2.58\%}$ \\
\algname{} & 40 & ${65.12\% \pm 4.19\%}$ & ${34.61\% \pm 1.24\%}$ \\
\bottomrule
\end{tabular}
\end{table}

\subsection{Ablation Studies}
\label{app:ablation}

\paragraph{Effect of Iterative Latent-State Refinement.} \label{app:ablation-em} 


We first study whether the refinement loop is necessary. Figure~\ref{fig:em-iterations} reports performance as a function of the number of refinement iterations. Across both datasets, most improvements occur in the first few refinement iterations. MM-detection and final-state accuracy typically improve substantially from the first iteration to the third iteration, while observation-induced-missing-value imputation error stabilizes after a small number of iterations. We therefore use a fixed small number of refinement
iterations in the main experiments.
\begin{figure}[h!]
    \centering
    \begin{minipage}{\linewidth}
        \centering
        \includegraphics[width=\linewidth]{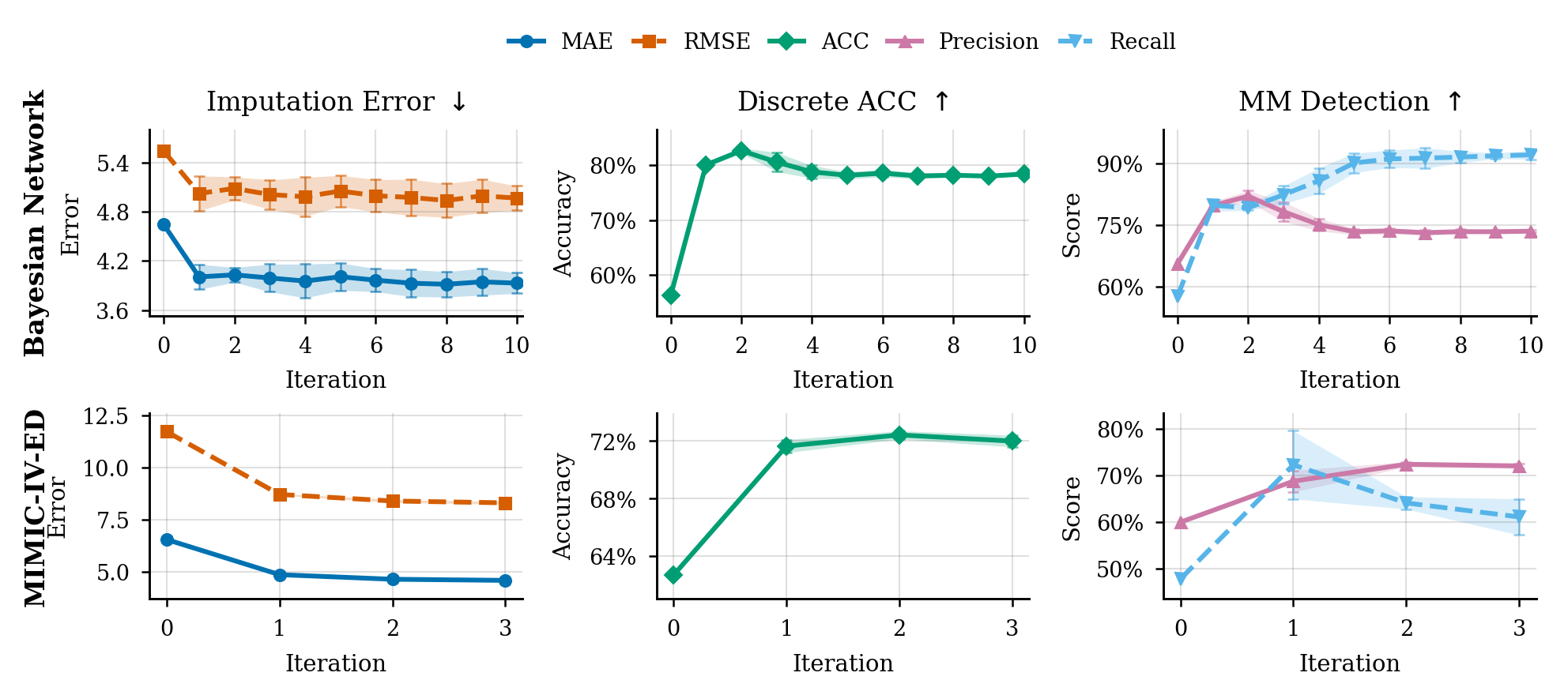}
        \caption{Effect of iterative latent-state refinement on Bayesian Network and MIMIC-IV-ED under ordinary MCAR at 10\% missing ratio. Curves show the mean over five seeds, with shaded regions indicating one standard deviation. Results under the zeroth iteration are obtained with a randomly initialized denoising network, with all other components kept the same as in \algname{}.}
        \label{fig:em-iterations}
    \end{minipage}
\end{figure}

\paragraph{Component Ablation.}
We ablate the main components of Diff-Joint on the Bayesian Network dataset under ordinary MCAR. DiffPuter is a standard diffusion imputer and does not output MM labels, whereas Diff-Joint models the joint state $(\bx,\bc)$ to capture dependencies between tabular values and MM labels. We further compare a one-refinement variant with the full model, which uses 10 outer refinement iterations. Table~\ref{tab:component_ablation} shows that joint $(\bx,\bc)$ modeling substantially improves final-state recovery over standard diffusion imputation, and iterative refinement further improves MM F1, final-state accuracy, and RMSE.

\begin{table}[ht!]
\centering
\caption{Component ablation on the Bayesian Network dataset under ordinary MCAR.}
\label{tab:component_ablation}
\setlength{\tabcolsep}{4.5pt}
\scriptsize
\begin{tabular}{llccccc}
\toprule
Missing Ratio & Method & Joint $(\bx,\bc)$ & Refinement & MM F1$_{\text{out}} \uparrow$ & Acc$_{\text{out}}$. $\uparrow$ & RMSE$_{\text{out}} \downarrow$ \\
\midrule
20 & DiffPuter & \xmark & \cmark & -- & 44.61\% & 5.1962 \\
20 & \algname, 1-refine & \cmark & \xmark & {69.85\%} & {71.11\%} & 5.1371 \\
20 & \algname, 10-iterations & \cmark & \cmark & \textbf{73.39\%} & \textbf{76.18\%} & \textbf{5.0832} \\
\midrule
30 & DiffPuter & \xmark & \cmark & -- & 51.89\% & 5.2764 \\
30 & \algname, 1-refine & \cmark & \xmark & 58.99\% & 60.26\% & 5.1461 \\
30 & \algname, 10-iterations & \cmark & \cmark & \textbf{64.76\%} & \textbf{72.67\%} & \textbf{4.9987} \\
\bottomrule
\end{tabular}
\end{table}

\paragraph{Aggregation-Rule Ablation.} \label{app:ablation-update-rule}

We conduct an ablation study on the rule used to update the meaningful-missingness mask during iterative latent-state refinement. Let $U$ denote the uncertainty-based signal obtained from the high-uncertainty cluster, and let $V$ denote the sampled-mask signal obtained from the joint diffusion model.
Table~\ref{tab:update_rule_ablation_closedloop_ratio30} reports the aggregation-rule ablation results. 
Majority-vote-only collapses from the all-zero MM-mask initialization,
and the AND rule inherits this conservativeness. Uncertainty-only provides a nontrivial bootstrap signal, while OR fusion achieves the best MM F1 and
final-state accuracy among the tested closed-loop rules. This supports using OR fusion as the default refinement rule, as it achieves the best overall performance in accurately identifying meaningful missingness.

\begin{table}[ht!]
\centering
\caption{Aggregation-rule ablation on the Bayesian Network dataset under ordinary MCAR at missing ratio 30.}
\label{tab:update_rule_ablation_closedloop_ratio30}
\setlength{\tabcolsep}{3.5pt}
\renewcommand{\arraystretch}{0.95}
\scriptsize
\begin{adjustbox}{max width=\textwidth}
\begin{tabular}{lccccc}
\toprule
Rule 
& Recall$_{\mathrm{out}}\uparrow$
& Precision$_{\mathrm{out}}\uparrow$
& MM F1$_{\mathrm{out}}\uparrow$
& ACC$_{\mathrm{out}}\uparrow$
& RMSE$_{\mathrm{out}}\downarrow$ \\
\midrule
Uncertainty-only
& {66.83\%} & {46.14\%} & {54.59\%} & {56.42\%} & 5.026 \\
Majority-vote-only
& 0.00\% & 0.00\% & 0.00\% & 51.27\% & {5.006} \\
Uncertainty $\wedge$ majority vote
& 0.00\% & 0.00\% & 0.00\% & 51.32\% & \textbf{4.942} \\
Uncertainty $\vee$ majority vote (ours)
& \textbf{76.33\%} & \textbf{54.08\%} & \textbf{63.31\%}
& \textbf{66.95\%} & 5.168 \\
\bottomrule
\end{tabular}
\end{adjustbox}
\end{table}

\begin{remark}
    The results in Tables~\ref{tab:component_ablation} and~\ref{tab:update_rule_ablation_closedloop_ratio30} are based on a single randomized run for each
configuration and are intended mainly for qualitative comparison among variants.
Due to random-seed variation, the absolute values are not expected to exactly
match the corresponding main-table results.
\end{remark}

\section{More Implementation Details on Baseline methods}
\label{app:exp-details-baselines}

We compare against recent deep-learning baselines for tabular imputation.
Specifically, we include CACTI~\cite{gorla2025cacti}, a recent strong masked-autoencoding method whose original benchmark reports improvements over autoencoding baselines such as ReMasker~\cite{du2024remasker} and AutoComplete~\cite{an2023autocomplete}. We also include DiffPuter~\cite{zhang2025diffputer}, a recent diffusion-based imputation method that outperforms prior diffusion baselines such as TabCSDI~\cite{zheng2022tabcsdi} and MissDiff~\cite{ouyang2023missdiff} on standard imputation benchmarks. On the Bayesian Network dataset, whose column names carry no semantic information, we use the non-embedding variant of CACTI, denoted CMAE.

All baselines are trained and evaluated on exactly the same observed datasets and
the same train/test splits as \algname{}. When compatible, we use the same preprocessing pipeline as in the main experiments; otherwise, we follow the
model-specific preprocessing required by the official implementation. 
Since these baselines are not designed to explicitly infer meaningful-missingness labels, MM precision and
recall are not applicable and are therefore omitted for them.

\paragraph{Implementation Sources.} We implement the baseline methods according to the following description or publicly available
codebases.
\begin{itemize}[leftmargin=2em, itemsep=2pt, topsep=0pt, parsep=0pt, partopsep=0pt]
    \item \textbf{Mean/Mode}:  Missing continuous entries are filled with the empirical mean of the corresponding column, and missing discrete entries are filled with the empirical mode. Both statistics are computed from observed entries in the training split and then reused for test-set imputation. This baseline has no tunable hyperparameters.
    \item\textbf{missForest} \cite{stekhoven2012missforest}: We use the implementations provided in the official HyperImpute \cite{pmlr-v162-jarrett22a} repository: \url{https://github.com/vanderschaarlab/hyperimpute}.
    \item \textbf{CACTI/CMAE} \cite{gorla2025cacti}: We use the official implementation of CACTI at \url{https://github.com/sriramlab/CACTI}. On MIMIC-IV-ED, we run CACTI with contextual column-name embeddings. Specifically, we precompute feature-name embeddings using \path{sentence-transformers/all-MiniLM-L6-v2} \cite{wang2020minilm} and use the resulting \texttt{embeddings\_colnames\_MiniLM.npz} file as CACTI's column-context input. On the Bayesian Network dataset, the column names are synthetic and carry no semantic information. We therefore disable the column-name embedding module and denote the resulting non-contextual variant as CMAE. This change removes arbitrary column-name semantics while keeping the masked autoencoding architecture and the remaining model configuration unchanged.
    
    \item\textbf{DiffPuter} \cite{zhang2025diffputer}:
    We use the official implementation at
\url{https://github.com/hengruizhang98/DiffPuter}.
\end{itemize}

\paragraph{Hyperparameter 
Settings.}
We use a fixed hyperparameter protocol for all baseline methods. Hyperparameters
are selected, when needed, using only the training split and an internal
validation configuration. The validation criterion is ordinary-value imputation
performance on held-out validation entries. We do not tune any method on the
test set, downstream prediction performance, or MM precision/recall/F1.

Mean/Mode has no tunable hyperparameters. For CACTI/CMAE and DiffPuter, we
follow the recommended or default configurations from the corresponding official
implementations, except for dataset-specific changes required by input
dimensionality, memory constraints, or contextual inputs. For missForest, we use
the \texttt{hyperimpute} search space and perform hyperparameter selection
separately for each random seed, using only the corresponding training and
validation split. The detailed baseline configurations are listed in
Table~\ref{tab:baseline_hyperparameters}. Final results are summarized as the
mean and standard deviation over five random seeds.

\begin{table}[h]
\centering
\caption{Hyperparameter settings used for the baseline methods in the main experiments.}
\label{tab:baseline_hyperparameters}
\setlength{\tabcolsep}{5pt}
\renewcommand{\arraystretch}{1.15}
\scriptsize
\begin{tabular}{p{0.18\linewidth}p{0.76\linewidth}}
\toprule
\textsc{Model} & \textsc{Hyperparameters / configuration} \\
\midrule

\textsc{Mean/Mode}
&
No tunable hyperparameters.\\
\midrule

\textsc{missForest}
&
Implemented using the \texttt{hyperimpute} package. For each random seed, we sample
50 hyperparameter configurations from the package-provided search space and report
the best-performing configuration for that seed. Final results are summarized as
mean and standard deviation over five seeds. \\
\midrule

\textsc{CMAE}
&
Used on the Bayesian Network dataset. CMAE uses the CACTI masked-autoencoding
architecture with the column-name embedding module disabled, because the synthetic column names carry no semantic information. The optimization settings
are \texttt{epochs = 300}, \texttt{warmup\_epochs = 50},
\texttt{batch\_size = 128}, \texttt{lr = 1e-3}, \texttt{min\_lr = 5e-6},
\texttt{weight\_decay = 1e-3}, \texttt{grad\_clip = 5.0},
\texttt{mask\_ratio = 0.9}, \texttt{embed\_dim = 64},
\texttt{nencoder = 10}, and \texttt{ndecoder = 4}. \\
\midrule

\textsc{CACTI}
&
Used on MIMIC-IV-ED. CACTI uses the same optimization settings as CMAE:
\texttt{epochs = 300}, \texttt{warmup\_epochs = 50},
\texttt{batch\_size = 128}, \texttt{lr = 1e-3}, \texttt{min\_lr = 5e-6},
\texttt{weight\_decay = 1e-3}, \texttt{grad\_clip = 5.0},
\texttt{mask\_ratio = 0.9}, \texttt{embed\_dim = 64},
\texttt{nencoder = 10}, and \texttt{ndecoder = 4}. Column-name embeddings are
precomputed using
\texttt{sentence-transformers/all-MiniLM-L6-v2}~\cite{wang2020minilm}.\\
\midrule

\textsc{DiffPuter}
&
Bayesian Network: \texttt{max\_iter = 10}. MIMIC-IV-ED:
\texttt{max\_iter = 10}. Common settings are \texttt{hid\_dim = 1024},
\texttt{num\_trials = 10}, \texttt{num\_steps = 50},
\texttt{num\_epochs = 1000}, \texttt{batch\_size = 4096},
\texttt{learning\_rate = 1e-4}, \texttt{scheduler\_patience = 40}, and
\texttt{early\_stop\_patience = 200}. \\

\bottomrule
\end{tabular}
\end{table}

\end{document}